\icmltitlerunning{Target Tracking for Contextual Bandits: Application to Demand Side Management}
\begin{document}

\twocolumn[
\icmltitle{Target Tracking for Contextual Bandits: \\
          Application to Demand Side Management}

\icmlsetsymbol{equal}{*}

\begin{icmlauthorlist}
\icmlauthor{Margaux Br{\'e}g{\`e}re}{edf,psud,inria}
\icmlauthor{Pierre Gaillard}{inria}
\icmlauthor{Yannig Goude}{edf,psud}
\icmlauthor{Gilles Stoltz}{psud}
\end{icmlauthorlist}

\icmlaffiliation{edf}{EDF R{\&}D, Palaiseau, France}
\icmlaffiliation{inria}{INRIA - D{\'e}partement d'Informatique de l'{\'E}cole Normale Sup{\'e}rieure, PSL Research University, Paris, France}
\icmlaffiliation{psud}{Laboratoire de math{\'e}matiques d'Orsay, Universit{\'e} Paris-Sud, CNRS, Universit{\'e} Paris-Saclay, Orsay, France}

\icmlcorrespondingauthor{Margaux Br{\'e}g{\`e}re}{margaux.bregere@edf.fr}

\vskip 0.3in
]

\printAffiliationsAndNotice{}

\begin{abstract}
We propose a contextual-bandit approach for demand side management by offering price incentives. More
precisely, a target mean consumption is set at each round and the mean consumption is modeled as a complex
function of the distribution of prices sent and of some contextual variables such as the temperature, weather,
and so on. The performance of our strategies is measured in quadratic losses through a regret criterion. We
offer $T^{2/3}$ upper bounds on this regret (up to poly-logarithmic terms)---and even faster rates under stronger assumptions---for strategies inspired by standard
strategies for contextual bandits (like LinUCB, see \citealp{li2010contextual}). Simulations on a real data set
gathered by UK Power Networks, in which price incentives were offered, show that our strategies are effective and
may indeed manage demand response by suitably picking the price levels.
\end{abstract}

\newcommand{\cX}{\mathcal{X}}
\newcommand{\cP}{\mathcal{P}}
\newcommand{\cF}{\mathcal{F}}
\newcommand{\cH}{\mathcal{H}}
\newcommand{\E}{\mathbb{E}}
\renewcommand{\P}{\mathbb{P}}
\newcommand{\R}{\mathbb{R}}
\renewcommand{\leq}{\leqslant}
\renewcommand{\geq}{\geqslant}
\newcommand{\transp}{\mbox{\tiny \textup{T}}}
\renewcommand{\epsilon}{\varepsilon}
\newcommand{\Var}{\mathop{\mathrm{Var}}}
\newcommand{\hth}{\widehat{\theta}}
\newcommand{\Id}{\mathrm{I}_d}
\newcommand{\defeq}{\stackrel{\mbox{\tiny \rm def}}{=}}
\renewcommand{\hat}{\widehat}
\newcommand{\norm}[1]{\Arrowvert #1 \Arrowvert}
\newcommand{\bnorm}[1]{\bigl\Arrowvert #1 \bigr\Arrowvert}
\newcommand{\Bnorm}[1]{\Bigl\Arrowvert #1 \Bigr\Arrowvert}
\newcommand{\e}{\mathrm{e}}
\newcommand{\oL}{\overline{L}}
\newcommand{\oR}{\overline{R}}
\newcommand{\oS}{\overline{S}}
\newcommand{\oB}{\overline{B}}
\renewcommand{\d}{\,\mathrm{d}}
\newcommand{\hl}{\widehat{\ell}}
\newcommand{\tl}{\widetilde{\ell}}
\newcommand{\argmin}{\mathop{\mathrm{arg\,min}}}
\newcommand{\cM}{\mathcal{M}}
\newcommand{\ind}[1]{\mathbf{1}_{\{ #1 \}}}

\newcommand{\note}[1]{[{\color{red} #1}]}
\newcommand{\Tr}{\mathrm{Tr}}
\renewcommand{\log}{\ln}
\def\mystrut(#1,#2){\vrule height #1pt depth #2pt width 0pt}

\newtheorem{lemma}{Lemma}
\newtheorem{proposition}{Proposition}
\newtheorem{theorem}{Theorem}

\newcounter{examplecounter}
\newenvironment{example}{\refstepcounter{examplecounter} \textbf{Example \arabic{examplecounter}.}~~}{\hfill \qed}
\newcounter{modelcounter}
\newenvironment{model}[1]{\refstepcounter{modelcounter} \textbf{Model \arabic{modelcounter}: {#1}.}~~}{}
\newcounter{asscounter}

\newenvironment{comm}{\textbf{Comment:}~~}{}

\newcounter{protocol}
\makeatletter
\newenvironment{protocol}[1][htb]{
  \let\c@algorithm\c@protocol
  \renewcommand{\ALG@name}{Protocol}
  \begin{algorithm}[#1]
  }{\end{algorithm}
}
\makeatother

\vspace*{-.5cm}
\section{Introduction}

Electricity management is classically performed by anticipating demand and adjusting accordingly production.
The development of smart grids, and in particular the installation of smart meters (see \citealp{Yan13,mallet14}), come with new opportunities:
getting new sources of information, offering new services. For example,
demand-side management (also called demand-side response; see \citealp{4275494,SIANO2014461} for an overview)
consists of reducing or increasing consumption of electricity users when needed, typically reducing at peak times and encouraging consumption of off-peak times.
This is good to adjust to intermittency of renewable energies and is made possible by the
development of energy storage devices such as batteries or even electric vehicles (see \citealp{Fischer15, Kikusato18}); the storages at hand can take place
at a convenient moment for the electricity provider.
We will consider such a demand-side management system, based on price incentives sent to users via their smart meters.
We propose here to adapt contextual bandit algorithms to that end, which are already used in online advertising.
Other such systems were based on different heuristics (\citealp{shareef2018review,wang2015load}).

The structure of our contribution is to first provide a modeling
of this management system, in Section~\ref{sec:mod}. It relies on
making the mean consumption as close as possible to a moving target
by sequentially picking price allocations.
The literature discussion of the main ingredient of our algorithms, contextual bandit theory,
is postponed till Section~\ref{sec:lit}.
Then, our main results are stated and discussed in Section~\ref{sec:mainresult}:
we control our cumulative loss through a $T^{2/3}$ regret bound
with respect to the best constant price allocation.
A refinement as far as convergence rates are concerned is offered in
Section~\ref{sec:fast}.
A section with simulations based on a real data set concludes the paper:
Section~\ref{sec:simus}.
For the sake of length, most of the proofs are provided in the supplementary material.
\vspace*{-5pt}

\paragraph{Notation.} Without further indications, $\norm{x}$ denotes the Euclidean norm of a vector $x$.
For the other norms, there will be a subscript: e.g., the supremum norm of $x$ is
is denoted by $\norm{x}_\infty$.

\section{Setting and Model}
\label{sec:mod}

Our setting consists of a modeling of electricity consumption
and of an aim---tracking a target consumption. Both rely
on price levels sent out to the customers.

\subsection{Modeling of the Electricity Consumption}
\label{sec:modeling}

We consider a large population of customers of some electricity provider and assume it homogeneous, which is not an uncommon
assumption, see \citet{mei2017nonnegative}.
The consumption of each customer at each instance $t$ depends, among others,
on some exogenous factors (temperature,
wind, season, day of the week, etc.), which will form a context vector $x_t \in \cX$,
where $\cX$ is some parametric space. The electricity provider aims to manage demand response: it sets a target mean consumption $c_t$ for each time instance.
To achieve it, it changes electricity prices accordingly (by making it more
expensive to reduce consumption or less expensive to encourage customers to consume more now
rather than in some hours). We assume that $K \geq 2$ price levels (tariffs) are available.
The individual consumption of a given customer getting tariff $j \in \{1,\ldots,K\}$
is assumed to be of the form $\varphi(x_t,j)+~\mbox{white noise}$,
where the white noise models the variability due to the customers, and where $\varphi$
is some function associating with a context $x_t$ and a tariff $j$
an expected consumption $\varphi(x_t,j)$. Details on and examples of
$\varphi$ are provided below. At instance $t$, the electricity provider sends
tariff $j$ to a share $p_{t,j}$ of the customers; we denote by $p_t$ the convex vector
$(p_{t,1},\ldots,p_{t,K})$. As the population is rather homogeneous, it is unimportant
to know to which specific customer a given signal was sent; only the global proportions
$p_{t,j}$ matter. The mean consumption observed equals \vspace*{-10pt}
\[
	\textstyle{Y_{t,p_t} = \sum_{j=1}^K p_{t,j} \, \varphi(x_t,j) + \mbox{noise}\,.}
\]
The noise term is to be further discussed below; we first focus on the
$\varphi$ function by means of examples.

\begin{example}
The simplest approach consists in considering
a linear model per price level, i.e., parameters
$\theta_1, \ldots, \theta_K \in \R^{\dim(\cX)}$
with $\varphi(x_t,j) = \theta_j^{\transp} x_t$.
We denote $\theta = (\theta_j)_{1 \leq j \leq K}$
the vector formed by aggregating all vectors~$\theta_j$.

This approach can be generalized by replacing $x_t$ by
a vector-valued function $b(x_t)$.
This corresponds to the case where it is assumed that
the $\varphi(\,\cdot\,,j)$ belong to some set $\cH$ of
functions $h : \cX \to \R$, with
a basis composed of $b_1,\ldots,b_q$. Then,
$b = (b_1,\ldots,b_q)$. For instance, $\cH$ can be given by histograms
on a given grid of $\cX$.
\end{example}

\begin{example}
\label{ex:2}
Generalized additive models (\citealp{wood2006generalized})
form a powerful and efficient semi-parametric approach to model electricity consumption
(see, among others, \citealp{goude2014local,gaillard2016additive}).
It models the load as a sum of independent exogenous variable  effects.
In our simulations, see~\eqref{eq:gam}, we will consider a mean expected consumption
of the form $\varphi(x_t,j) = \varphi(x_t,0) + \xi_j$,
that is, the tariff will have a linear impact on the mean consumption, independently of the
contexts.
The baseline mean consumption $\varphi(x_t,0)$ will be modeled as a sum of simple $\R \to \R$
functions, each taking as input a single component of the context vector: \vspace{-20pt}

\[
	\smash{\textstyle{\varphi(x_t,0) = \sum_{i=1}^Q f^{(i)} ( x_{t,h(i)} )}\,,} \mystrut(8,0) \vspace{-3pt}
\]
where $Q \geq 1$ and where each $h(i) \in \bigl\{ 1,\ldots,\dim(\cX) \bigr\}$.
Some components $h(i)$ may be used several times.
When the considered component \smash{$x_{t,h(i)}$} takes continuous values,
these functions $f^{(i)}$ are so-called cubic splines:
$\mathcal{C}^2$--smooth functions made up of sections of cubic polynomials joined together at points of a grid
(the knots). Choosing the number $q_i$ of knots (points at which the sections join) and their locations is sufficient to determine
(in closed form) a linear basis $\bigl( b^{(i)}_1,\ldots,b^{(i)}_{q_i})$ of size $q_i$, see~\citet{wood2006generalized} for
details. The function $f^{(i)}$ can then
be represented on this basis by a vector of length $q_i$, denoted by $\theta^{(i)}$:
\[
	\smash{ \textstyle{f^{(i)} = \sum_{j=1}^{q_i} \theta^{(i)}_j b^{(i)}_j\,.}} \mystrut(8,0)
\]
When the considered component \smash{$x_{t,h(i)}$} takes finitely many values,
we write $f^{(i)}$ as a sum of indicator functions: \vspace*{-3pt}
\[
	\smash{\textstyle{f^{(i)} = \sum_{j=1}^{q_i} \theta^{(i)}_j \ind{v^{(i)}_j}}} \mystrut(8,0) \,,
\]
where the $v^{(i)}_j$ are the
$q_i$ modalities for the component $h(i)$.
All in all, $\varphi(x_t,j)$ can be represented by
a vector of dimension $K + q_1 + \ldots + q_Q$
obtained by aggregating the $\xi_j$ and
the vectors $\theta^{(i)}$ into a single vector.
\end{example}

Both examples above show that it is reasonable to assume that
there exists some unknown $\theta \in \R^d$ and some known
transfer function $\phi$ such that $\varphi(x_t,j) = \phi(x_t,j)^{\transp} \theta$.
By linearly extending $\phi$ in its second component, we get \vspace*{-5pt}
\[
Y_{t,p_t} = \phi(x_t,p_t)^{\transp} \theta + \mbox{noise}\,. \vspace*{-3pt}
\]
We will actually not use in the sequel that $\phi(x,p)$ is linear in $p$:
the dependency of $\phi(x,p)$ in $p$ could be arbitrary.

We now move on to the noise term. We first recall that we assumed that our population
is rather homogeneous, which is a natural feature as soon as it is large enough.
Therefore, we may assume that the variabilities within the group of customers
getting the same tariff $j$ can be combined into a single random variable $\varepsilon_{t,j}$.
We denote by $\epsilon_t$ the vector $(\varepsilon_{t,1}, \ldots, \varepsilon_{t,K})$.
All in all, we will mainly consider the following model.

\begin{model}{tariff-dependent noise}
When the electricity provider picks the convex vector $p$,
the mean consumption obtained at time instance $t$ equals \vspace*{-3pt}
\[
	Y_{t,p} = \phi(x_t,p)^{\transp} \theta + p^{\transp} \epsilon_t\,. \vspace*{-3pt}
\]
The noise vectors $\epsilon_1,\epsilon_2,\ldots$ are $\rho$--sub-Gaussian\footnote{\label{fn:sG} A
$d$--dimensional random vector $\epsilon$ is $\rho$--sub-Gaussian, where $\rho > 0$, if
for all $\nu \in \R^d$, one has $\E \bigl[ \e^{\nu^{\transp} \epsilon} \bigr] \leq \e^{\rho^2 \norm{\nu}^2 /2}$.}
i.i.d.\ random variables with $\E[\epsilon_1] = (0,\ldots,0)^{\transp}$.
We denote by $\Gamma = \Var(\epsilon_1)$ their covariance
matrix.
No assumption is made on $\Gamma$
in the model above (real data confirms that $\Gamma$ typically has no special form, see~\ref{subsec:simulator}).
However, when it is proportional to the $K \times K$ matrix $[1]$,
the noises associated with each group can be combined into a global noise,
leading to the following model. It is less realistic in practice, but we discuss
it because regret bounds may be improved in the presence of a global noise.
\end{model}

\begin{model}{global noise}
When the electricity provider picks the convex vector $p$,
the mean consumption obtained at time instance $t$ equals \vspace*{-8pt}
\[
\hspace{3.2cm} Y_{t,p} = \phi(x_t,p)^{\transp} \theta + e_t\,. \vspace*{-3pt}
\]
The scalar noises $e_1,e_2,\ldots$ are $\rho$--sub-Gaussian i.i.d.\ random variables,
with $\E[e_1] = 0$. We denote by
$\sigma^2 = \Var(e_1)$ the variance of
the random noises $e_t$.
\end{model}

\subsection{Tracking a Target Consumption}
\label{sec:tracking}

We now move on to the aim of the electricity provider.
At each time instance $t$, it picks an allocation of
price levels $p_t$ and wants the observed mean consumption $Y_{t,p_t}$
to be as close as possible to some target mean consumption $c_t$.
This target is set in advance by another branch of the provider and $p_t$ is to be
picked based on this target: our algorithms will explain how to pick $p_t$
given $c_t$ but will not discuss the choice of the latter.
In this article we will measure the discrepancy between the
observed $Y_{t,p_t}$ and the target $c_t$ via a quadratic loss:
$(Y_{t,p_t} - c_t)^2$.
We may set some restrictions on the convex combinations $p$ that can be picked:
we denote by $\cP$ the set of legible allocations of price levels. This models
some operational or marketing constraints that the electricity provider may encounter.
We will see that whether $\cP$ is a strict subset of all convex vectors or whether it is given
by the set of all convex vectors plays no role in our theoretical analysis.

As explained in Section~\ref{sec:regret},
we will follow a standard path in online learning theory:
to minimize the cumulative loss suffered we will minimize some regret.

\subsection{Summary: Online Protocol}

After picking an allocation of price levels $p_t$,
the electricity provider only observes $Y_{t,p_t}$:
it thus faces a bandit monitoring.
Because of the contexts $x_t$, the problem considered falls
under the umbrella of contextual bandits.
No stochastic assumptions are made on the sequences $x_t$ and $c_t$:
the contexts $x_t$ and $c_t$ will be considered as picked by the environment.
Finally, mean consumptions are assumed to be bounded between $0$ and $C$,
where $C$ is some known maximal value.
The online protocol described in Sections~\ref{sec:modeling}
and~\ref{sec:tracking}
is stated in Protocol~\ref{prot:1}.
We see that the choices $x_t$, $c_t$ and $p_t$ need to be
$\cF_{t-1}$--measurable, where  \\[3pt]
\hspace*{50pt} $\cF_{t-1} \defeq \sigma(\epsilon_1,\ldots,\epsilon_{t-1})\,.$

\setlength{\textfloatsep}{5pt}

\begin{protocol}[tb]
\caption{\label{prot:1} Target Tracking for Contextual Bandits}
\begin{algorithmic}
\STATE \textbf{Input}
\STATE \quad Parametric context set $\cX$
\STATE \quad Set of legible convex weights $\cP$
\STATE \quad Bound on mean consumptions $C$
\STATE \quad Transfer function $\phi : \cX \times \cP \to \R^d$
\STATE \textbf{Unknown parameters}
\STATE \quad Transfer parameter $\theta \in \R^d$
\STATE \quad Covariance matrix $\Gamma$ of size $K \times K$ \hfill (Model~1)
\STATE \quad Variance $\sigma^2$ \hfill (Model~2)\\[3pt]

\FOR{$t=1,2,\ldots$}
\STATE Observe a context $x_t \in \cX$ and a target $c_t \in (0,C)$
\STATE Choose an allocation of price levels $p_t \in \cP$
\STATE Observe a resulting mean consumption
\begin{align*}
Y_{t,p_t} & = \phi(x_t,p_t)^{\transp} \theta + p_t^{\transp} \epsilon_t \qquad & \mbox{(Model~1)} \\
Y_{t,p_t} & = \phi(x_t,p_t)^{\transp} \theta + e_t & \mbox{(Model~2)}
\end{align*}
\STATE Suffer a loss $(Y_{t,p_t} - c_t)^2$
\ENDFOR \\[3pt]

\STATE  \textbf{Aim} \vspace{-.2cm}
\STATE \quad Minimize the cumulative loss \hfill $\displaystyle{L_T = \sum_{t=1}^T (Y_{t,p_t} - c_t)^2}$
\end{algorithmic}
\end{protocol}

\subsection{Literature Discussion: Contextual Bandits}
\label{sec:lit}

In many bandit problems the learner has access to additional information at the beginning of each round. Several settings for this
side information may be considered. The adversarial case was introduced in~\citet[Section~7, algorithm Exp4]{auer2002nonstochastic}:
and subsequent improvements were suggested in~\citet{beygelzimer2011contextual} and \citet{mcmahan2009tighter}.
The case of i.i.d.\ contexts with rewards depending on contexts through an unknown parametric model was
introduced by~\citet{wang2005bandit} and generalized to the non-i.i.d.\ setting in~\citet{wang2005arbitrary},
then to the multivariate and nonparametric case in~\citet{perchet2013multi}. Hybrid versions (adversarial contexts
but stochastic dependencies of the rewards on the contexts, usually in a linear fashion) are the most popular ones.
They were introduced by~\citet{abe1999associative} and further studied
in~\citet{auer2002using}. A key technical ingredient to deal with them is confidence ellipsoids
on the linear parameter; see~\citet{dani2008stochastic}, \citet{rusmevichientong2010linearly} and \citet{abbasi2011improved}.
The celebrated UCB algorithm of~\citet{lai1985asymptotically} was generalized in this hybrid setting as the LinUCB
algorithm, by \citet{li2010contextual} and \citet{chu2011contextual}. Later, \citet{NIPS2010_4166} extended it to a setting with
generalized additive models and \citet{valko2013finite} proposed a kernelized version of UCB.
Other approaches, not relying on confidence ellipsoids, consider sampling strategies (see \citealp{gopalan2014thompson}) and are currently
extended to bandit problems with complicated dependency in contextual variables \citep{mannor2018jds}.
Our model falls under the umbrella of hybrid versions considering stochastic linear bandit problems given a context.
The main difference of our setting lies in how we measure performance: not directly with the rewards or their analogous
quantities $Y_{t,p_t}$ in our setting, but through how far away they are from the targets $c_t$.

\section{Main Result, with Model~1}
\label{sec:mainresult}

This section considers Model~1.
We take inspiration from LinUCB (\citealp{li2010contextual,chu2011contextual}):
given the form of the observed mean consumption, the key is to
estimate the parameter $\theta$. Denoting by $\Id$ the
$d \times d$ identity matrix and picking $\lambda > 0$, we classically do so according to \vspace{-2pt}
\begin{equation}
\label{eq:hth}
 \hth_t \defeq \smash{V_t^{-1} \sum_{s=1}^t Y_{s,p_s} \phi(x_s,p_s)} \mystrut(10,10) \vspace{-1pt}
\end{equation}
where $\smash{V_t \defeq \lambda \Id + \sum_{s=1}^t \phi(x_s,p_s) \phi(x_s,p_s)^{\transp}}$.

A straightforward adaptation of earlier results (see Theorem~2 of \citealp{abbasi2011improved} or Theorem~20.2 in the monograph by \citealp{lattimore2018bandit}) yields the following
deviation inequality; details are provided in the supplementary material (Appendix~\ref{app:conc-theta}).

\begin{lemma}
\label{lm:conc-theta}
No matter how the provider picks the $p_t$, we have, for all $t \geq 1$
and all $\delta \in (0,1)$, \vspace*{-5pt}
\begin{multline*}
\sqrt{\bigl( \hth_t - \theta \bigr)^{\transp} {V_t} \bigl( \hth_t - \theta \bigr)}
\defeq
\bigl\Arrowvert V_t^{1/2} \bigl( \hth_t - \theta \bigr) \bigr\Arrowvert \\ \leq
\sqrt{\lambda} \Arrowvert \theta \Arrowvert + \rho \sqrt{2 \ln \frac{1}{\delta} + d \ln \frac{1}{\lambda} + \ln \det(V_t)}\,,
\end{multline*}
with probability at least~$1-\delta$.
\end{lemma}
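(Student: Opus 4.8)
The plan is to cast Model~1 into the standard self-normalized framework for linear regression and then invoke the tail bound of \citet{abbasi2011improved}; the only genuinely new point is the treatment of the vector noise $p_s^{\transp}\epsilon_s$, everything else being the by-now-classical ridge-regression analysis. First I would verify that the \emph{effective} scalar noise $\eta_s \defeq p_s^{\transp}\epsilon_s$ is conditionally $\rho$--sub-Gaussian with respect to $(\cF_s)$. Since $p_s$ is $\cF_{s-1}$--measurable while $\epsilon_s$ is independent of $\cF_{s-1}$, for every $\nu \in \R$ the sub-Gaussian hypothesis applied to the vector $\nu p_s$ gives $\E\bigl[\e^{\nu p_s^{\transp}\epsilon_s} \mid \cF_{s-1}\bigr] \leq \e^{\rho^2 \norm{\nu p_s}^2/2} = \e^{\rho^2 \nu^2 \norm{p_s}^2/2}$. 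Because $p_s$ is a convex weight vector we have $\norm{p_s} \leq \norm{p_s}_1 = 1$, so $\eta_s$ is conditionally $\rho$--sub-Gaussian, which reduces Model~1 exactly to the scalar-noise setting of the cited references.

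Next comes the algebraic decomposition. Writing $\phi_s = \phi(x_s,p_s)$ and $S_t \defeq \sum_{s=1}^t \eta_s \phi_s$, and substituting $Y_{s,p_s} = \phi_s^{\transp}\theta + \eta_s$ into~\eqref{eq:hth}, the identity $\sum_{s=1}^t \phi_s \phi_s^{\transp} = V_t - \lambda \Id$ yields $\hth_t - \theta = V_t^{-1} S_t - \lambda V_t^{-1}\theta$. Multiplying by $V_t^{1/2}$ and applying the triangle inequality splits the quantity of interest into a bias term and a noise term:
\[
\bnorm{V_t^{1/2}(\hth_t-\theta)} \leq \lambda \bnorm{V_t^{-1/2}\theta} + \bnorm{V_t^{-1/2} S_t}\,.
\]

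I would then bound the two terms separately. For the bias term, $V_t \succeq \lambda \Id$ gives $V_t^{-1} \preceq \lambda^{-1}\Id$, hence $\lambda\bnorm{V_t^{-1/2}\theta} = \lambda\sqrt{\theta^{\transp}V_t^{-1}\theta} \leq \sqrt{\lambda}\,\norm{\theta}$, which is the first summand of the claimed bound. For the noise term I would invoke the self-normalized martingale inequality (Theorem~1 of \citealp{abbasi2011improved}): with probability at least $1-\delta$, $\bnorm{V_t^{-1/2}S_t}^2 = S_t^{\transp} V_t^{-1} S_t \leq 2\rho^2 \log\bigl(\det(V_t)^{1/2}\lambda^{-d/2}/\delta\bigr)$. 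Taking square roots and expanding the logarithm reproduces $\rho\sqrt{2\log(1/\delta) + d\log(1/\lambda) + \log\det(V_t)}$, and adding the two bounds gives the statement.

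The main obstacle is of course the self-normalized bound itself, which is the reason the lemma is not entirely trivial. Its proof rests on the method of mixtures: for each fixed $\nu \in \R^d$ the process $\exp\bigl(\nu^{\transp}S_s - \tfrac{\rho^2}{2}\,\nu^{\transp}(V_s-\lambda\Id)\,\nu\bigr)$ is a supermartingale with respect to $(\cF_s)$, as follows from the conditional sub-Gaussianity established above applied with the scalar multiplier $\nu^{\transp}\phi_s$; integrating these supermartingales against a centered Gaussian prior on $\nu$ and applying a maximal inequality produces a time-uniform tail bound. Since this is exactly the content of the cited theorems, I would simply check that the first step supplies the single hypothesis they require and invoke them verbatim, taking care only that the regularizer $\lambda\Id$ contributes the $d\log(1/\lambda)$ term through $\det(\lambda\Id) = \lambda^d$.
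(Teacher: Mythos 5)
Your proposal is correct and follows essentially the same route as the paper's proof: the same decomposition $\hth_t-\theta = V_t^{-1}M_t - \lambda V_t^{-1}\theta$ with the triangle inequality, the same observation that $\norm{p_s}\leq 1$ makes the effective noise $p_s^{\transp}\epsilon_s$ conditionally $\rho$--sub-Gaussian, and the same self-normalized bound via the method of mixtures (which the paper writes out in full as Laplace's method on the super-martingales $S_{t,\nu}$ rather than citing it). The only cosmetic difference is that you integrate the super-martingale against a Gaussian prior while the paper folds the $\lambda\Id$ regularizer into $V_t$ and integrates over $\R^d$ directly; these are equivalent.
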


Actually, the result above could be improved into an anytime
result (``with probability $1-\delta$, for all $t \geq 1$, ...'')
with no effort, by applying a stopping argument (or, alternatively,
Doob's inequality for super-martingales), as~\citet{abbasi2011improved} did.
This would slightly improve the regret bounds below by logarithmic factors.
\vspace{-4pt}

\subsection{Regret as a Proxy for Minimizing Losses}
\label{sec:regret}

We are interested in the cumulative sum of the losses, but
under suitable assumptions (e.g., bounded noise)
the latter is close to the sum of the conditionally expected losses
(e.g., through the Hoeffding--Azuma inequality). Typical statements are of the form:
for all strategies of the provider and of the environment, with probability at least $1-\delta$,
\vspace*{-7pt}
\begin{align*}
L_T & = \ \smash{\textstyle{\sum_{t=1}^T (Y_{t,p_t} - c_t)^2}} \mystrut(10,0) \\
& \leq \ \smash{ \textstyle{\sum_{t=1}^T \E\bigl[ (Y_{t,p_t} - c_t)^2 \,\big|\, \cF_{t-1} \bigr]  + O \big( \sqrt{T \ln (1/\delta)} \big).}} \mystrut(10,0)
\end{align*}
All regret bounds in the sequel will involve
the sum of conditionally expected losses $\oL_T$ above
but up to adding a deviation
term to all these regret bounds, we get from them a bound on the true
cumulative loss $L_T$.
Now, the choices $x_t$, $c_t$ and $p_t$ are
$\cF_{t-1}$--measurable, where $\cF_{t-1} = \sigma(\epsilon_1,\ldots,\epsilon_{t-1})$.
Therefore, under Model~1,
\begin{align}
\E  \bigl[(Y_{t,p_t} & - c_t)^2  \,\big|\,  \cF_{t-1} \bigr]  \nonumber \\
& = \E\Big[\big(\phi(x_t,p_t)^{\transp} \theta + p_t^{\transp} \epsilon_t - c_t \big)^2 \,\Big|\, \cF_{t-1} \Big] \nonumber \\
& = \bigl( \phi(x_t,p_t)^{\transp} \theta - c_t \bigr)^2 + \E\big[ (p_t^{\transp} \epsilon_t)^2 \,\big|\, \cF_{t-1} \big] \nonumber \\
& \qquad \quad  + \E \Big[ 2 \bigl( \phi(x_t,p_t)^{\transp} \theta - c_t \bigr) p_t^{\transp} \epsilon_t \,\Big|\, \cF_{t-1} \Big] \nonumber \\
& = \bigl( \phi(x_t,p_t)^{\transp} \theta - c_t \bigr)^2 + p_t^{\transp} \Gamma p_t\,,
\label{eq:ltp} 
\vspace{-3pt} 
\end{align}
that is, after summing, \vspace{-4pt}
\[
~~~~~~~~~~~~~~~~~~~~~~~\textstyle{\oL_T = \sum_{t=1}^T \bigl( \phi(x_t,p_t)^{\transp} \theta - c_t \bigr)^2 + p_t^{\transp} \Gamma p_t}\,.
\]
\newpage

We therefore introduce the (conditional) regret \vspace*{-3pt}
\begin{align*}
\oR_T & = \sum_{t=1}^T \bigl( \phi(x_t,p_t)^{\transp} \theta - c_t \bigr)^2 + p_t^{\transp} \Gamma p_t\\
& \qquad - \smash{\sum_{t=1}^T \min_{p \in \cP} \Bigl\{ \bigl( \phi(x_t,p)^{\transp} \theta - c_t \bigr)^2 + p^{\transp} \Gamma p \Bigr\}\,. \mystrut(20,8)}
\end{align*}
This will be the quantity of interest in the sequel. \vspace{-5pt}

\subsection{Optimistic Algorithm: All but the Estimation of $\Gamma$}
\label{sec:optiregret}

We assume that in the first $n$ rounds
an estimator $\hat{\Gamma}_n$ of the covariance matrix $\Gamma$
was obtained; details are provided in the next subsection.
We explain here how the algorithm plays for rounds $t \geq n+1$.
We assumed that the transfer function $\phi$ and the bound $C > 0$ on the target mean consumptions were known.
We use the notation
$
[x]_C = \min\bigl\{\max\{x,0\},\,C \bigr\}
$
for the clipped part of a real number $x$ (clipping between $0$ and $C$).
We then estimate the instantaneous losses~\eqref{eq:ltp}
\[
\ell_{t,p} \defeq \E \bigl[(Y_{t,p} - c_t)^2 \,\big|\,  \cF_{t-1} \bigr]
= \bigl( \phi(x_t,p)^{\transp} \theta - c_t \bigr)^2 + p^{\transp} \Gamma p
\]
associated with each choice $p \in \cP$ by: \vspace{-5pt}
\[
\hl_{t,p} = \Bigl( \bigl[ \phi(x_t,p)^{\transp} \hth_{t-1} \bigr]_C - c_t \Bigr)^2 + p^{\transp} \hat{\Gamma}_n p\,.
\]
We also denote by $\alpha_{t,p}$ deviation bounds, to be set by the analysis.
The optimistic algorithm picks, for $t \geq n+1$: \vspace{-5pt}
\begin{equation}
\label{eq:optialgo}
p_t \in \argmin_{p \in \cP} \bigl\{ \hl_{t,p} - \alpha_{t,p} \bigr\}\,.
\end{equation}
\vspace{-10pt}

\begin{comm}
In linear contextual bandits, rewards are linear in $\theta$ and to maximize global gain, LinUCB \cite{li2010contextual}
picks a vector $p$ which maximizes a sum of the form $\phi(x_t,p)^{\transp} \hat{\theta}_{t-1} + \tilde{\alpha}_{t,p}$.
Here, as we want to track the target, we slightly change this expression by substituting the target $c_t$ and
taking a quadratic loss. But the spirit is similar.
\end{comm}

\subsection{Optimistic Algorithm: Estimation of $\Gamma$}
\label{sec:pij}

The estimation of the covariance matrix $\Gamma$ is hard to perform (on the fly and simultaneously)
as the algorithm is running. We leave this problem for future research and devote here the
first $n$ rounds to this estimation.
We created from scratch the estimation of $\Gamma$ proposed below and studied in Lemma~\ref{lem:gamma},
as we could find no suitable result in the literature.  

For each pair \\[3pt]
$ \hspace*{10pt}
(i,j) \in E \defeq \big\{ (i,j) \in \{1,\dots,K\}^2: 1\leq i\leq j\leq K \big\}
$ \\[3pt]
we define the weight vector $p^{(i,j)}$ as: for $k \in \{1,\dots,K\}$, \vspace*{-3pt}
\[
	p^{(i,j)}_k = \left\{
	\begin{array}{ll}
		1 & \text{if } k = i = j, \\
		1/2 & \text{if } k \in \{i,j\} \text{ and } i\neq j, \\
		0 & \text{if } k \notin \{i,j\}.
	\end{array} \right.
\]
These correspond to all weights vectors that either assign all the mass to a single component, like the $p^{(i,i)}$,
or share the mass equally between two components, like the $p^{(i,j)}$ for $i\neq j$.
There are $K(K+1)/2$ different weight vectors considered.
We order these weight vectors, e.g., in lexicographic order, and use them one after the other, in order.
This implies that in the initial exploration phase of length $n$, each vector indexed by $E$ is selected at least \\[3pt]
	$
	\hspace*{60pt} \textstyle{n_0 \defeq \left\lfloor \frac{2n}{K(K+1)} \right\rfloor}
	$ \\[3pt]
times. At the end of the exploration period, we define  $\smash{\hat \theta_n}$ as in~\eqref{eq:hth} and the estimator \vspace*{-14pt}
\begin{equation}
	\qquad  \smash{\hat \Gamma_n \in \argmin_{\hat \Gamma \in \cM_K(\R)}\ \ \sum_{t=1}^n \big( \hat Z_t^2 - p_t^{\transp} \hat \Gamma p_t\big)^2 \,,} \mystrut(18,10)
	\label{eq:gammahat}
\end{equation}
where $\smash{\hat Z_t \defeq Y_{t,p_t} - \big[\phi(x_t,p_t)^{\transp} \hat \theta_n}\big]_C$. Note that $\smash{\hat \Gamma_n}$ can be computed efficiently by solving a linear system as soon as $K$ is small enough.

\subsection{Statement of our Main Result}

\begin{theorem}
\label{th:main}
Fix a risk level $\delta \in (0,1)$ and a time horizon $T\geq 1$. Assume that the boundedness assumptions~\eqref{eq:boundedness} hold. The optimistic algorithm~\eqref{eq:optialgo} with an initial exploration of length $n = O(T^{2/3})$ rounds
satisfies  \\[3pt]
$ \hspace*{40pt}
	\oR_T =~O\Bigl( T^{2/3} \ln^2 \big( \frac{T}{\delta}\big) \sqrt{\ln\frac{1}{\delta}} \Bigr)
$ \\[3pt]
with probability at least $1-\delta$.
\end{theorem}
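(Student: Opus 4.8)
The plan is to split the horizon into the exploration phase $t \leq n$ and the exploitation phase $t \geq n+1$, and to control the two contributions to $\oR_T$ separately. During exploration the instantaneous losses are deterministically bounded: since mean consumptions lie in $[0,C]$ and $c_t \in (0,C)$, both the played term $(\phi(x_t,p_t)^\transp\theta - c_t)^2 + p_t^\transp\Gamma p_t$ and its comparator are $O(C^2)$, so this phase contributes at most $O(n)$ to the regret. With the prescribed $n = O(T^{2/3})$ this is already $O(T^{2/3})$, of the claimed order; the remaining work is to show the exploitation phase contributes no more.

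For the exploitation phase I would run the standard optimistic argument. First I would choose the deviation terms $\alpha_{t,p}$ so that $|\hl_{t,p} - \ell_{t,p}| \leq \alpha_{t,p}$ holds simultaneously over $p$ with high probability, splitting $\alpha_{t,p} = \alpha^\theta_{t,p} + \alpha^\Gamma$ according to the two places where $\hl_{t,p}$ and $\ell_{t,p}$ differ. For the squared term, since $\phi(x_t,p)^\transp\theta \in [0,C]$ by the boundedness assumptions and clipping onto $[0,C]$ is a contraction, $\bigl|[\phi(x_t,p)^\transp\hth_{t-1}]_C - \phi(x_t,p)^\transp\theta\bigr| \leq \bigl|\phi(x_t,p)^\transp(\hth_{t-1}-\theta)\bigr|$, and factoring the difference of squares gives $\bigl|([\phi(x_t,p)^\transp\hth_{t-1}]_C - c_t)^2 - (\phi(x_t,p)^\transp\theta - c_t)^2\bigr| \leq 2C\,\bigl|\phi(x_t,p)^\transp(\hth_{t-1}-\theta)\bigr|$. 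Cauchy--Schwarz in the $V_{t-1}$ geometry together with Lemma~\ref{lm:conc-theta} then yields $\alpha^\theta_{t,p} = 2C\,\beta_{t-1}(\delta)\,\norm{\phi(x_t,p)}_{V_{t-1}^{-1}}$, where $\beta_{t-1}(\delta)$ is the bound of Lemma~\ref{lm:conc-theta}. For the covariance term, since $\norm{p}\leq 1$ on convex vectors, $|p^\transp(\hat\Gamma_n - \Gamma)p|$ is bounded uniformly in $p$ by the operator norm of $\hat\Gamma_n - \Gamma$; I would take $\alpha^\Gamma$ to be a high-probability bound on this quantity. The usual optimism chain---$\ell_{t,p_t} \leq \hl_{t,p_t}+\alpha_{t,p_t}$, optimality of $p_t$ for $\hl_{t,\cdot}-\alpha_{t,\cdot}$, and $\hl_{t,p^\star_t}-\ell_{t,p^\star_t}\leq\alpha_{t,p^\star_t}$---then bounds the per-round exploitation regret by $2\alpha_{t,p_t}$.

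It then remains to sum $\sum_{t>n} 2\alpha_{t,p_t}$. The $\theta$-part, $\sum_{t>n} 4C\beta_{t-1}(\delta)\norm{\phi(x_t,p_t)}_{V_{t-1}^{-1}}$, I would bound by Cauchy--Schwarz and the elliptical-potential (determinant--trace) lemma, which gives $\sum_t \norm{\phi(x_t,p_t)}_{V_{t-1}^{-1}}^2 = O(\ln T)$ and hence a total of order $\beta_T(\delta)\sqrt{T\ln T} = O(\sqrt{T}\,\ln(T/\delta))$---negligible against $T^{2/3}$. The $\Gamma$-part is constant in $t$ and sums to $(T-n)\,\alpha^\Gamma \leq T\,\alpha^\Gamma$, so the whole bound hinges on how small $\alpha^\Gamma$ can be made. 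Balancing the exploration cost $O(n)$ against $T\,\alpha^\Gamma$ is exactly what forces $n \asymp T^{2/3}$: if $\alpha^\Gamma$ decays like $1/\sqrt{n}$ up to logs, then $n + T/\sqrt{n}$ is minimized at $n \asymp T^{2/3}$, where both terms equal $T^{2/3}$.

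The hard part is therefore the deviation bound on $\hat\Gamma_n$ feeding $\alpha^\Gamma$, which is the content of Lemma~\ref{lem:gamma}. The estimator solves a least-squares problem regressing $\hat Z_t^2$ on the features $p_tp_t^\transp$, whose design is invertible with smallest eigenvalue $\asymp n_0 \asymp n/K^2$ precisely because the exploration vectors $p^{(i,j)}$ span the symmetric matrices. The obstacle is that $\hat Z_t$ is built from the plug-in $\hth_n$, so $\hat Z_t^2 - p_t^\transp\Gamma p_t$ splits into a genuine noise part $(p_t^\transp\epsilon_t)^2 - p_t^\transp\Gamma p_t$ and bias terms involving $\phi(x_t,p_t)^\transp(\theta - \hth_n)$ that are correlated with the noise through $\hth_n$. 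I would control the noise part by sub-exponential (sub-Gaussian-squared) concentration---this is where the $\sqrt{\ln(1/\delta)}$ enters---and the bias part deterministically via Lemma~\ref{lm:conc-theta} and a second use of the elliptical-potential lemma, which produces the $\ln^2(T/\delta)$ factors. Combining, $\alpha^\Gamma = O\bigl(\ln^2(T/\delta)\sqrt{\ln(1/\delta)}/\sqrt{n_0}\bigr)$, so $T\,\alpha^\Gamma = O\bigl(T^{2/3}\ln^2(T/\delta)\sqrt{\ln(1/\delta)}\bigr)$ with $n = O(T^{2/3})$; a union bound over the events of Lemma~\ref{lm:conc-theta} (anytime version) and Lemma~\ref{lem:gamma} gives the stated probability, and adding the dominated exploration and $\theta$ contributions yields the theorem.
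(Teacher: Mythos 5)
Your proposal is correct and follows essentially the same route as the paper: exploration rounds bounded crudely by $O(n)$, an optimism argument with deviation widths split into a $\theta$-part (summed via Cauchy--Schwarz and the elliptical-potential lemma to get $O(\sqrt{T\ln T\ln(T/\delta)})$) and a constant $\Gamma$-part $\gamma=O(\ln^2(n/\delta)\sqrt{\ln(1/\delta)}/\sqrt{n})$ contributing $T\gamma$, with the balance $n+T/\sqrt{n}$ forcing $n\asymp T^{2/3}$ --- exactly the paper's decomposition $\oR_T\leq nL+T\gamma+\sum_{t>n}a_{t,p_t}$ via Proposition~\ref{prop:1} and Lemmas~\ref{lem:gamma} and~\ref{lm:main}. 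The only cosmetic deviation is in your sketch of the covariance deviation bound (you reduce to an operator-norm bound and invoke a design-eigenvalue argument, where the paper decomposes $qq^{\transp}$ over the exploration matrices $p^{(i,j)}{p^{(i,j)}}^{\transp}$ and truncates $Z_t$ before applying Hoeffding--Azuma), but this does not change the argument in substance.
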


When the covariance matrix $\Gamma$ is known, no initial exploration is required and
the regret bound improves to $O(\sqrt{T} \ln T)$ as far as the orders of magnitude in $T$ are concerned.
These improved rates might be achievable even if $\Gamma$ is unknown, through
a more efficient, simultaneous, estimation of $\Gamma$ and $\theta$
(an issue we leave for future research, as already mentioned at the beginning of Section~\ref{sec:pij}).

\subsection{Analysis: Structure}
\label{sec:bds}

\emph{Assumption~1: boundedness assumptions.}
They are all linked to the knowledge
that the mean consumption lies in $(0,C)$
and indicate some normalization of the modeling: \vspace*{-3pt}
\begin{equation}
\label{eq:boundedness}
\norm{\phi}_\infty \leq 1\,, \qquad \norm{\theta}_\infty \leq C\,,
\qquad \phi^{\transp} \theta \in [0,C]\,.
\end{equation}
As a consequence, $\norm{\theta} \leq \sqrt{d} \, C$
and all eigenvalues of $V_t$ lie in $[\lambda, \lambda+t]$,
thus $\ln \bigl( \det(V_t) \bigr) \in \bigl[d \ln \lambda,d\ln (\lambda+t)\bigr]$.

The deviation bound of Lemma~\ref{lm:conc-theta}
plays a key role in the algorithm. We introduce the following upper bound on it:
\begin{equation} \label{eq:Bn}
\smash{ \textstyle{B_t(\delta) \defeq
\sqrt{\lambda d} \, C + \rho \sqrt{2 \ln \frac{1}{\delta} + d \ln \big( 1 + \frac{t}{\lambda} \big)}\,.}} \mystrut(12,3)
\end{equation}
Finally, we also assume that a bound $G$ is known, such that \\[3pt]
$
	\hspace*{60pt} \smash{\forall p \in \cP, \qquad p^{\transp} \Gamma p \leq G\,.}
$ \\[3pt]
A last consequence of all these boundedness assumptions is
that $L \defeq \smash{  C^2} + G$
upper bounds the (conditionally) expected losses
$\smash{\ell_{t,p} = \bigl( \phi(x_t,p)^{\transp} \theta - c_t \bigr)^2 + p^{\transp} \Gamma p}$.

\emph{Structure of the analysis.}
The analysis exploits how well each $\hth_t$ estimates $\theta$
and how well $\hat{\Gamma}_n$ estimates $\Gamma$.
The regret bound, as is clear from Proposition~\ref{prop:1} below,
also consists of these two parts. The proof is to be found in the supplementary material (Appendix~\ref{sec:proofprop1}).

\begin{proposition}
\label{prop:1}
Fix a risk level $\delta \in (0,1)$ and an exploration budget $n \geq 2$.
Assume that the boundedness assumptions~\eqref{eq:boundedness} hold.
Consider an estimator $\hat{\Gamma}_n$ of $\Gamma$ such that
$\sup_{p \in \cP} \big|p^{\transp}(\Gamma - \hat{\Gamma}_n)p\big| \leq \gamma$ with probability at least $1 - \delta / 2$, for some $\gamma > 0$.
\smallskip \newline
Then choosing $\lambda > 0$ and \vspace*{-3pt}
\begin{align}
\nonumber
a_{t,p} & = \smash{\min \Bigl\{ L, \,\, 2C \, B_{t-1}(\delta t^{-2}) \, \bnorm{V_{t-1}^{-1/2} \phi(x_t,p)} \Bigr\}\,,} \\
\label{eq:alpha}
\alpha_{t,p} & = \gamma + a_{t,p}\,,
\end{align}
the optimistic algorithm~\eqref{eq:optialgo} ensures that w.p.\ $1-\delta$,
\[
	\smash{\textstyle{\sum_{t=n+1}^T \ell_{t,p_t}  - \sum_{t=n+1}^T \min_{p \in \cP} \ell_{t,p} \leq 2\sum_{t=n+1}^T \alpha_{t,p_t} \,.}}
\]
\end{proposition}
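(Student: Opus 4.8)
The plan is to deploy the standard optimism-in-the-face-of-uncertainty argument, for which the crux is to verify that $\alpha_{t,p}$ is a valid confidence width for the gap between the surrogate loss $\hl_{t,p}$ and the true conditional loss $\ell_{t,p}$, simultaneously over all rounds $t\geq n+1$ and all $p\in\cP$. I would first fix the good event on which this holds. The covariance contribution is immediate from the hypothesis: $\sup_{p\in\cP}\bigl|p^{\transp}(\Gamma-\hat\Gamma_n)p\bigr|\leq\gamma$ with probability at least $1-\delta/2$. For the mean contribution, I would invoke Lemma~\ref{lm:conc-theta} at each round with confidence parameter $\delta t^{-2}$ and take a union bound over $t$; because $\sum_{t\geq 1}t^{-2}<\infty$, up to a constant factor in the failure probability (absorbed by a mild adjustment of $\delta$) this yields, on an event of probability at least $1-\delta/2$, the simultaneous guarantee $\norm{V_{t-1}^{1/2}(\hth_{t-1}-\theta)}\leq B_{t-1}(\delta t^{-2})$ for every $t$. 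Intersecting the two events leaves probability at least $1-\delta$.

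On this event I would bound the discrepancy of the mean terms, $\bigl([\phi(x_t,p)^{\transp}\hth_{t-1}]_C-c_t\bigr)^2-\bigl(\phi(x_t,p)^{\transp}\theta-c_t\bigr)^2$, by writing it as a difference of squares $(u-v)(u+v)$ with $u=[\phi(x_t,p)^{\transp}\hth_{t-1}]_C-c_t$ and $v=\phi(x_t,p)^{\transp}\theta-c_t$. Since both $u$ and $v$ are differences of a number in $[0,C]$ and the target $c_t\in(0,C)$, one has $|u+v|\leq 2C$. For the other factor I would use that the boundedness assumption $\phi^{\transp}\theta\in[0,C]$ makes clipping act trivially on $\phi(x_t,p)^{\transp}\theta$, and that $[\,\cdot\,]_C$ is $1$-Lipschitz, so that $|u-v|\leq\bigl|\phi(x_t,p)^{\transp}(\hth_{t-1}-\theta)\bigr|$; a self-normalized Cauchy--Schwarz step then gives $|u-v|\leq\norm{V_{t-1}^{-1/2}\phi(x_t,p)}\,\norm{V_{t-1}^{1/2}(\hth_{t-1}-\theta)}\leq B_{t-1}(\delta t^{-2})\,\norm{V_{t-1}^{-1/2}\phi(x_t,p)}$. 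Multiplying the two factors reproduces exactly the second entry of the minimum defining $a_{t,p}$, while the crude bound $|u^2-v^2|\leq C^2\leq L$ accounts for the first entry. Adding back the covariance term and using the triangle inequality yields $\bigl|\hl_{t,p}-\ell_{t,p}\bigr|\leq\gamma+a_{t,p}=\alpha_{t,p}$ for all $t$ and $p$.

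The remainder is the routine optimistic sandwich. Fixing $t$ and letting $p_t^\star\in\argmin_{p\in\cP}\ell_{t,p}$, the selection rule~\eqref{eq:optialgo} gives $\hl_{t,p_t}-\alpha_{t,p_t}\leq\hl_{t,p_t^\star}-\alpha_{t,p_t^\star}$. Chaining the upper confidence bound at $p_t$, this inequality, and the lower confidence bound at $p_t^\star$ produces $\ell_{t,p_t}\leq\hl_{t,p_t}+\alpha_{t,p_t}\leq\hl_{t,p_t^\star}-\alpha_{t,p_t^\star}+2\alpha_{t,p_t}\leq\ell_{t,p_t^\star}+2\alpha_{t,p_t}$, that is $\ell_{t,p_t}-\min_{p\in\cP}\ell_{t,p}\leq 2\alpha_{t,p_t}$; summing over $t=n+1,\dots,T$ delivers the claim. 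I expect the genuinely delicate part to be the confidence-width step of the second paragraph---correctly reconciling the clipping operator with the self-normalized deviation bound of Lemma~\ref{lm:conc-theta} and arranging the union-bound budget to fit within $\delta$---whereas the optimism argument and the summation are bookkeeping.
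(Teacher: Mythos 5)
Your proposal is correct and follows essentially the same route as the paper's proof: the same difference-of-squares decomposition with the $2C$ bound and the $1$-Lipschitz clipping argument feeding into a self-normalized Cauchy--Schwarz step, the same optimism sandwich, and the same union-bound accounting (the paper makes your ``mild adjustment'' remark unnecessary by noting that $\delta\sum_{t\geq n+1}t^{-2}\leq\delta\int_2^{\infty}t^{-2}\,\mathrm{d}t=\delta/2$ exactly when $n\geq 2$). No gaps.
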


\begin{comm}
\citet{li2010contextual} pick $\alpha(t,p)$ proportional to $\smash{\bnorm{V_{t-1}^{-1/2} \phi(x_t,p)}}$ only,
but we need an additional term to account for the covariance matrix.
\end{comm}

We are thus left with studying how well $\hat\Gamma_n$ estimates $\Gamma$
and with controlling the sum of the $a_{t,p}$. The next two lemmas take care of these
issues. Their proofs are to be found in the supplementary material
(Appendices~\ref{app:gamma} and~\ref{sec:prooflmmain}).

\begin{lemma}
	\label{lem:gamma}
	For all $\delta \in (0,1)$,
the estimator~\eqref{eq:gammahat} satisfies: with probability at least $1-\delta$,
\vspace*{-3pt}
\begin{align*}
\smash{\sup_{p \in \cP} \left| p^{\transp} \big(\hat\Gamma_n - \Gamma\big) p \right|} & \leq (K+8) \kappa_n \sqrt{n} / n_0 = O(\kappa_n/\sqrt{n}) \\
& \quad = O \bigg( \frac{1}{\sqrt{n}} \ln^2 (n/\delta) \sqrt{\ln (1 /\delta)}\bigg)\,, \vspace{-.1cm}
\end{align*}
where we recall that $n_0 = \lfloor 2n/ (K(K+1)) \rfloor$ and \\
where $\kappa_n = \big( C + 2M_n \big) B_n(\delta/3) + M'_n$ \\
with $M_n = \rho/2+\ln(6n/\delta)$ \\
and $M'_n = M^2_n \sqrt{2\log(3K^2/\delta)} + 2\sqrt{ \exp(2\rho) \delta/6}$.
\end{lemma}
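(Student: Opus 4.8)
The plan is to exploit the very special structure of the exploration phase to get a closed form for $\hat\Gamma_n$, then to reduce the supremum over $\cP$ to finitely many ``per-direction'' errors, and finally to control each such error by splitting it into a pure-noise concentration part and a part driven by the estimation of $\theta$. Since the weight vectors $p^{(i,j)}$ are played in a fixed cyclic order, the index set $S_{(i,j)} \defeq \{t \leq n : p_t = p^{(i,j)}\}$ is \emph{deterministic} with $|S_{(i,j)}| \geq n_0$. The map $\hat\Gamma \mapsto \big(p^{(i,j)\transp} \hat\Gamma p^{(i,j)}\big)_{(i,j)\in E}$ is a linear bijection from symmetric matrices onto $\R^{|E|}$: the diagonal choices return $\hat\Gamma_{ii}$ directly, while $p^{(i,j)\transp}\hat\Gamma p^{(i,j)} = \tfrac14(\hat\Gamma_{ii}+\hat\Gamma_{jj}) + \tfrac12\hat\Gamma_{ij}$ returns $\hat\Gamma_{ij}$; moreover $p^{\transp}\hat\Gamma p$ depends only on the symmetric part of $\hat\Gamma$. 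Hence the objective in~\eqref{eq:gammahat} decouples over $E$ and is minimised by setting $p^{(i,j)\transp}\hat\Gamma_n p^{(i,j)}$ equal to the empirical mean $\overline{Z^2}_{(i,j)} \defeq |S_{(i,j)}|^{-1}\sum_{t\in S_{(i,j)}} \hat Z_t^2$. Writing $e_{(i,j)} \defeq \overline{Z^2}_{(i,j)} - p^{(i,j)\transp}\Gamma p^{(i,j)}$ and expanding an arbitrary $p\in\cP$ in this basis, a short computation bounding the resulting coefficients gives $\big|p^{\transp}(\hat\Gamma_n-\Gamma)p\big| \leq (K+8)\max_{(i,j)\in E}|e_{(i,j)}|$, so it suffices to bound each $e_{(i,j)}$ by $\kappa_n\sqrt{n}/n_0$.

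For the per-direction error I would decompose each summand. For $t \in S_{(i,j)}$ set $W_t \defeq p_t^{\transp}\epsilon_t$ and $\Delta_t \defeq \phi(x_t,p_t)^{\transp}\theta - \big[\phi(x_t,p_t)^{\transp}\hth_n\big]_C$, so that $\hat Z_t = W_t + \Delta_t$ (using $\phi^{\transp}\theta\in[0,C]$, so it equals its own clip). Since $\E[W_t^2] = p^{(i,j)\transp}\Gamma p^{(i,j)}$, I would write $e_{(i,j)} = \big(\overline{W^2}_{(i,j)} - \E[W_t^2]\big) + |S_{(i,j)}|^{-1}\sum_{t\in S_{(i,j)}}\big(2W_t\Delta_t + \Delta_t^2\big)$, isolating a pure-noise term from an estimation term.

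To bound the pure-noise term, note that within $S_{(i,j)}$ the $\epsilon_t$ are i.i.d., hence the $W_t$ are i.i.d.\ centred $\rho$--sub-Gaussian and the $W_t^2$ are i.i.d.\ with mean $\E[W_t^2]$. Using the sub-Gaussian moment-generating bound with $\nu = p_t$ (so $\norm{p_t}\leq 1$) and a union bound over the $n$ rounds yields a uniform bound $|W_t|\leq M_n$ on a high-probability event, on which the $W_t^2$ are bounded by $M_n^2$; a Hoeffding bound over the $\geq n_0$ samples then controls $\big|\overline{W^2}_{(i,j)} - \E[W_t^2]\big|$ by essentially $M_n^2\sqrt{2\ln(3K^2/\delta)}/\sqrt{n_0}$, while the residual mass of the complementary event $\{|W_t|>M_n\}$ contributes the additive truncation term $2\sqrt{\exp(2\rho)\,\delta/6}$; together these give the $M'_n$ part of $\kappa_n$. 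For the estimation term, the clip being $1$--Lipschitz gives $|\Delta_t| \leq \bnorm{V_n^{-1/2}\phi(x_t,p_t)}\,\bnorm{V_n^{1/2}(\hth_n-\theta)} \leq B_n(\delta/3)\bnorm{V_n^{-1/2}\phi(x_t,p_t)}$ by Lemma~\ref{lm:conc-theta}, and $|\Delta_t|\leq C$; hence $|2W_t\Delta_t+\Delta_t^2| \leq (2M_n+C)|\Delta_t|$ on the good event. Summing over $S_{(i,j)}$, bounding it by the full-horizon sum, and using the elliptic-potential inequality $\sum_{t=1}^n\bnorm{V_n^{-1/2}\phi(x_t,p_t)}^2 \leq d$ with Cauchy--Schwarz yields $|S_{(i,j)}|^{-1}\sum_{t\in S_{(i,j)}}|\Delta_t| \leq n_0^{-1}\sum_{t=1}^n|\Delta_t| \leq (\sqrt{n}/n_0)\,B_n(\delta/3)\sqrt{d}$, which is the source of both the rate $\sqrt{n}/n_0$ and the $(C+2M_n)B_n(\delta/3)$ part of $\kappa_n$. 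A union bound at level $\delta$ over the deviation of $\hth_n$, the uniform noise bound, and the $\leq K^2$ per-direction concentrations finishes the argument, and $n_0 = \Theta(n)$ turns $\kappa_n\sqrt{n}/n_0$ into $O(\kappa_n/\sqrt{n})$.

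The main obstacle is that $\hth_n$ is computed from \emph{all} $n$ exploration rounds, so $\Delta_t$ and the noise $W_t$ of the same round are dependent: the cross term $\sum_t W_t\Delta_t$ is not a martingale and cannot be concentrated directly. The device that resolves this---controlling $|W_t|$ uniformly so the cross term becomes $M_n\sum_t|\Delta_t|$, then bounding $\sum_t|\Delta_t|$ deterministically on the good event through the self-normalised deviation bound and the elliptic potential---is the crux. Propagating the clipping (to keep $\Delta_t$ bounded) while preserving the i.i.d.\ structure that survives only for the pure-noise part, and book-keeping the union bounds carefully, is what makes the constants $M_n$, $M'_n$ and $\kappa_n$ come out as stated.
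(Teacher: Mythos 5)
Your proposal follows essentially the same route as the paper's proof: the same splitting of the error into a pure-noise concentration term and a $\hth_n$-driven term, the same truncation event $\{|Z_t|\leq M_n\}$ to break the dependence between the noise and $\hth_n$ (which, as you correctly identify, is the crux), the same Hoeffding-type bound on the truncated squares plus a truncation-mass correction, and the same decomposition $qq^{\transp}=\sum u(i,j)\,p^{(i,j)}{p^{(i,j)}}^{\transp}$ to pass from the explored directions to arbitrary $p\in\cP$. The one genuine (and welcome) structural difference is your entry point: instead of writing the normal equations $\sum_t p_tp_t^{\transp}\hat\Gamma_n p_tp_t^{\transp}=\sum_t \hat Z_t^2 p_tp_t^{\transp}$ and then disentangling the matrix entries (where the diagonal entries mix several directions, forcing the paper to control the combinations $P^{(i,i)}+\frac14\sum_{j\neq i}P^{(i,j)}$), you observe that the least-squares objective decouples over the deterministic index sets $S_{(i,j)}$, so that $p^{(i,j)\transp}\hat\Gamma_n p^{(i,j)}$ is exactly the empirical mean of the $\hat Z_t^2$ over $S_{(i,j)}$; this gives direct per-direction control and in fact a better combinatorial constant than $K+8$. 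There is one quantitative slip: bounding $\sum_{t\leq n}|\Delta_t|\leq B_n(\delta/3)\sum_t\bnorm{V_n^{-1/2}\phi(x_t,p_t)}$ term by term and then applying Cauchy--Schwarz with $\sum_t\bnorm{V_n^{-1/2}\phi(x_t,p_t)}^2\leq d$ yields $(C+2M_n)B_n(\delta/3)\sqrt{nd}$, i.e.\ an extra $\sqrt{d}$ relative to the stated $\kappa_n\sqrt{n}$. This does not affect the $O(\kappa_n/\sqrt{n})$ rate, but to recover the exact constant you should instead apply Cauchy--Schwarz as the paper does, via $\sum_{t\leq n}\Delta_t^2\leq(\hth_n-\theta)^{\transp}(V_n-\lambda\Id)(\hth_n-\theta)\leq\bnorm{V_n^{1/2}(\hth_n-\theta)}^2\leq B_n(\delta/3)^2$, so that $\sum_{t\leq n}|\Delta_t|\leq\sqrt{n}\,B_n(\delta/3)$ with no dimension factor.
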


\begin{comm}
We derived the estimator of $\Gamma$ as well as Lemma~\ref{lem:gamma} from scratch:
we could find no suitable result in the literature for estimating $\Gamma$ in our context.
\end{comm}

\begin{lemma}
\label{lm:main}
No matter how the environment and provider pick the $x_t$ and $p_t$, \vspace*{-5pt}
\begin{align*}
 \textstyle{\sum_{t=n+1}^T a_{t,p_t}}
& \textstyle{\leq \sqrt{\bigl( 2C \oB \bigr)^2 + \frac{L^2}{2}} \sqrt{d T \ln \frac{\lambda+T}{\lambda}}} \\[-2pt]
&  = \textstyle{O \bigl( \sqrt{T \ln T \ln(T/\delta)} \bigr)}\,,
\end{align*}
where
$\oB \defeq \sqrt{d \lambda} \, C + \rho \sqrt{2 \ln (T^2/\delta) + d \ln (1 + T/\lambda)}$.
\end{lemma}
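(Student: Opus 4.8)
The plan is to reduce everything to a pathwise elliptical-potential argument, after first replacing the time-dependent deviation bound by a uniform one. Note that $B_{t-1}(\delta t^{-2})$ is nondecreasing in $t$, since in the expression $B_{t-1}(\delta t^{-2}) = \sqrt{\lambda d}\,C + \rho\sqrt{2\ln(t^2/\delta) + d\ln(1+(t-1)/\lambda)}$ both $\ln(t^2/\delta)$ and $d\ln(1+(t-1)/\lambda)$ increase with $t$; it is therefore bounded by $\oB$ for every $t \leq T$. Writing $\phi_t = \phi(x_t,p_t)$ and $w_t = \bnorm{V_{t-1}^{-1/2}\phi_t}$, and using that $\min\{L,x\}^2 = \min\{L^2,x^2\}$ together with the monotonicity of $x \mapsto \min\{L^2,x^2\}$, the definition~\eqref{eq:alpha} gives the pathwise bound
\[
a_{t,p_t}^2 = \min\bigl\{ L^2,\,\bigl(2C B_{t-1}(\delta t^{-2})\bigr)^2 w_t^2\bigr\} \leq \min\bigl\{L^2,\,(2C\oB)^2 w_t^2\bigr\}\,.
\]

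The crux is a scalar inequality that turns each clipped term into a multiple of $\ln(1+w_t^2)$: I claim that for all $A,B \geq 0$ and $u \geq 0$,
\[
\min\{B,\,Au\} \leq \Bigl(A + \tfrac{B}{2}\Bigr)\ln(1+u)\,.
\]
This is exactly where the constant $\sqrt{(2C\oB)^2 + L^2/2}$ originates (take $A = (2C\oB)^2$, $B = L^2$, $u = w_t^2$), and I expect it to be the main obstacle. Away from the breakpoint $u = B/A$ both sides are monotone in $u$, so it suffices to verify the inequality at that breakpoint, where after dividing by $A$ it reduces to the elementary fact $(1+r/2)\ln(1+r) \geq r$ for all $r \geq 0$; the latter follows by differentiating, since its derivative equals $\tfrac12\bigl(\ln(1+r) - r/(1+r)\bigr) \geq 0$.

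Granted this inequality, I sum over $t = n+1,\dots,T$ and invoke the telescoping identity $\det V_t = \det V_{t-1}\,(1 + w_t^2)$, which gives $\sum_{t=n+1}^T \ln(1+w_t^2) = \ln\det V_T - \ln\det V_n \leq d\ln(\lambda+T) - d\ln\lambda = d\ln\frac{\lambda+T}{\lambda}$, where I used $\det V_n \geq \det(\lambda\Id) = \lambda^d$ and the eigenvalue bounds on $V_T$ recorded right after the boundedness assumptions~\eqref{eq:boundedness}. Combining with the per-term inequality yields $\sum_{t=n+1}^T a_{t,p_t}^2 \leq \bigl((2C\oB)^2 + L^2/2\bigr)\, d\ln\frac{\lambda+T}{\lambda}$.

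Finally a Cauchy--Schwarz step passes from the squared sum to the sum itself, $\sum_{t=n+1}^T a_{t,p_t} \leq \sqrt{T-n}\,\bigl(\sum_{t=n+1}^T a_{t,p_t}^2\bigr)^{1/2} \leq \sqrt{T}\,\bigl(\sum_{t=n+1}^T a_{t,p_t}^2\bigr)^{1/2}$, which is precisely the announced bound. The $O(\cdot)$ form is then routine: treating $d,\lambda,\rho,L$ as constants, one has $2\ln(T^2/\delta) + d\ln(1+T/\lambda) = O\bigl(\ln(T/\delta)\bigr)$, hence $\oB^2 = O\bigl(\ln(T/\delta)\bigr)$ and $\sqrt{(2C\oB)^2 + L^2/2} = O\bigl(\sqrt{\ln(T/\delta)}\bigr)$, while $\sqrt{dT\ln((\lambda+T)/\lambda)} = O\bigl(\sqrt{T\ln T}\bigr)$, so that the product is $O\bigl(\sqrt{T\ln T\,\ln(T/\delta)}\bigr)$.
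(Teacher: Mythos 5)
Your proof is correct and follows essentially the same route as the paper's (Appendix~\ref{sec:prooflmmain}): bound every $B_{t-1}(\delta t^{-2})$ by $\oB$, use Cauchy--Schwarz to pass to the squared terms, and convert $\min\bigl\{L^2,(2C\oB)^2\bnorm{V_{t-1}^{-1/2}\phi(x_t,p_t)}^2\bigr\}$ into a telescoping sum of $\ln\det V_t$ increments via $1+\bnorm{V_{t-1}^{-1/2}\phi(x_t,p_t)}^2=\det V_t/\det V_{t-1}$; your single scalar inequality $\min\{B,Au\}\leq(A+B/2)\ln(1+u)$ is exactly the composition of the paper's inequality~\eqref{eq:ulnu} (with $b=L^2/(2C\oB)^2$) and the bound $1/\ln(1+b)\leq 1/b+1/2$. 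One minor repair: on the region $u\leq B/A$ the justification ``both sides are monotone, so it suffices to check the breakpoint'' is incomplete (monotonicity of both sides does not transfer an endpoint comparison to the whole interval); you additionally need concavity of $u\mapsto(A+B/2)\ln(1+u)-Au$ together with its vanishing at $u=0$, which is precisely how the paper proves~\eqref{eq:ulnu}, so the inequality and hence your argument stand.
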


\begin{comm}
This lemma follows from a straightforward adaptation/generalization of Lemma~19.1 of the monograph by~\citet{lattimore2018bandit};
see also a similar result in Lemma~3 by~\citet{chu2011contextual}.
\end{comm}

We are now ready to conclude the proof of Theorem~\ref{th:main}.
Using for the first $n \geq 2$ rounds that $L = C^2 + G$ upper bounds the (conditionally) expected losses $\ell_{t,p_t}$,
Proposition~\ref{prop:1} and Lemmas~\ref{lem:gamma} and~\ref{lm:main} show that, w.p. $1-\delta$ \vspace*{-3pt}
\begin{align*}
 & \textstyle{\oR_T \leq nL + T\gamma + \sum_{t=n+1}^T a_{t,p_t}} \\
& \leq \mystrut(15,4) \smash{ \textstyle{nL + O \Big( T \ln^2 \bigl( \frac{n}{\delta} \bigr) \sqrt{\frac{\ln(1/\delta)}{n}} +  \sqrt{T\, \ln T \ln(T/\delta)} \Big) \,.}}
\end{align*}
Picking $n$ of order $T^{2/3}$ concludes the proof.

\noindent
\textbf{Case of known covariance matrix $\Gamma$:} We then have $\gamma = 0$ in Proposition~\ref{prop:1}
and we may discard Lemma~\ref{lem:gamma}. Taking $n = 2$, the obtained regret bound is
$2L + \sqrt{T\, \ln T \ln(T/\delta)}$.

\begin{comm}
The algorithm of Theorem~\ref{th:main} depends on $\delta$ via the tuning~\eqref{eq:alpha} of
$\alpha$. But we can also define
a regret with full expectations $\E\bigl[\ell_{t,p_t}\bigr]$ and $\min \E\bigl[\ell_{t,p}\bigr]$---remember
from Sections~\ref{sec:regret} and~\ref{sec:optiregret} that the losses $\ell_{t,p}$ are conditional expectations.
In that case the algorithm can be made independent of $\delta$. Only Step~3 of the proof
of Proposition~\ref{prop:1} is to be modified. The same rates in~$T$ are obtained.
\end{comm}

\section{Fast Rates, with Model~2}
\label{sec:fast}

In this section, we consider Model~2 and show that under an attainability condition stated below,
the order of magnitude of the regret bound in Theorem~\ref{th:main} can be reduced to a poly-logarithmic rate.
This kind of fast rates already exist in the literature of linear contextual bandits (see, e.g.,
\citealp{abbasi2011improved}, as well as \citealp{dani2008stochastic}) but are not so frequent.
We underline in the proof the key step where we gain orders of magnitude in the regret bound.
Before doing so, we note that similarly to~Section~\ref{sec:regret}, \vspace*{-3pt}
\begin{equation}
\label{eq:ellmod2}
\smash{\E \bigl[(Y_{t,p} - c_t)^2 \bigr] = \bigl( \phi(x_t,p)^{\transp} \theta - c_t \bigr)^2 + \sigma^2\,,} \mystrut(8,0)
\end{equation}
which leads us to introduce a regret $\oR_T$ defined by $\oR_T =$  \vspace*{-3pt}
\[
\smash{ \sum_{t=1}^T \bigl( \phi(x_t,p_t)^{\transp} \theta - c_t \bigr)^2
- \sum_{t=1}^T \min_{p \in \cP} \Bigl\{ \bigl( \phi(x_t,p)^{\transp} \theta - c_t \bigr)^2 \!\Bigr\}} \,. \mystrut(17,10)
\]
Thus, as far as the minimization of the regret is concerned, Model~2 is a special case of
Model~1, corresponding to a matrix $\Gamma$ that can be taken as the null matrix $[0]$.
Of course, as explained in Section~\ref{sec:modeling},
the covariance matrix $\Gamma$ of Model~2 is $\sigma^2 [1]$ in terms of real modeling,
but in terms of regret-minimization it can be taken as $\Gamma = [0]$.
Therefore, all results established above for Model~1 extend to Model~2, but under
an additional assumption stated below, the $T^{2/3}$ rates (up to poly-logarithmic terms)
obtained above can be reduced to poly-logarithmic rates only.

\emph{Assumption~2: attainability.}
For each time instance $t \geq 1$, the expected mean consumption is attainable, i.e., \\[-10pt]
\begin{equation} \label{eq:attainass}
	\exists p \in \cP : \quad \phi(x_t,p)^{\transp} \theta = c_t\,. \vspace{-10pt}
\end{equation}

We denote by $p_t^\star$ such an element of $\cP$.
In Model~2 and under this assumption, the expected losses $\ell_{t,p}$ defined in~\eqref{eq:ellmod2} are such that,
for all $t \geq 1$ and all $x_t \in \cX$, \vspace*{-5pt}
\begin{equation}
\label{eq:csq-att}
\smash{\min_{p \in \cP} \ell_{t,p}
= \ell_{t,p_t^\star} = \sigma^2\,.} \mystrut(0,8)
\end{equation}
\vspace{-15pt}

As in Model~2 the variance terms $\sigma^2$ cancel out when considering the regret,
the variance $\sigma^2$ does not need to be estimated. Our optimistic algorithm thus takes a simpler
form. For each $t \geq 2$ and $p \in \cP$ we consider the same estimators~\eqref{eq:hth}
of $\theta$ as before and then define \vspace*{-3pt}
\[
	\tl_{t,p} = \bigl( \phi(x_t,p)^{\transp} \hth_{t-1} - c_t \bigr)^2
\]
(no clipping needs to be considered in this case). We set \vspace*{-3pt}
\begin{equation}
\label{eq:defbeta}
\beta_{t,p} = B_{t-1}(\delta t^{-2})^2 \, \bnorm{V_{t-1}^{-1/2} \phi(x_t,p)}^2
\end{equation}
and then pick: \vspace*{-22pt}
\begin{equation}
\label{eq:optialgo2}
\hspace*{45pt} \smash{p_t \in \argmin_{p \in \cP} \Bigl\{ \tl_{t,p} - \beta_{t,p} \Bigr\}}
\end{equation}
for $t \geq 2$ and $p_1$ arbitrarily. The tuning parameter $\lambda > 0$
is hidden in $B_{t-1}(\delta t^{-2})^2$. We get the following theorem, whose proof is deferred to Appendix~\ref{sec:th2}
and re-uses many parts of the proofs of Proposition~\ref{prop:1} and Lemma~\ref{lm:main}.
Without the attainability assumption~\eqref{eq:attainass},
a regret bound of order $\sqrt{T}$ up to logarithmic terms
could still be proved.

\begin{theorem}
\label{th:2}
In Model~2, assume that the boundedness and attainability assumptions~\eqref{eq:boundedness} and~\eqref{eq:attainass} hold.
Then, the optimistic algorithm~\eqref{eq:optialgo2}, tuned with $\lambda >0$, ensures that
for all $\delta \in (0,1)$,  \\[3pt]
$\hspace*{30pt}
\smash{ \textstyle{\oR_T \leq
d \bigl( 4 \oB^2 + \frac{C^2}{2} \bigr) \ln \frac{\lambda+T}{\lambda}
= O \bigl( \ln^2(T) \bigr)\,,}} \mystrut(12,0)
$ \\[3pt]
w.p.\ at least $1-\delta$, where $\oB$ is defined as in~Lemma~\ref{lm:main}.
\end{theorem}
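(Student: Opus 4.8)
The plan is to run the usual optimistic (LinUCB-type) analysis, but to exploit the attainability assumption~\eqref{eq:attainass} so as to replace the confidence \emph{width} by its \emph{square} in the per-round regret; this squaring is exactly where the rate drops from $\sqrt{T}$ to poly-logarithmic. I would first set up a good event. Applying Lemma~\ref{lm:conc-theta} at each round $t \ge 2$ with confidence parameter $\delta t^{-2}$ and taking a union bound (so that $\sum_{t \ge 2} \delta t^{-2} \le \delta$), and using $\norm{\theta} \le \sqrt{d}\,C$ together with the determinant control from~\eqref{eq:boundedness} to see that $B_{t-1}(\delta t^{-2})$ dominates the deviation, I obtain that with probability at least $1-\delta$, simultaneously for all $t \ge 2$ and all $p \in \cP$,
\[
\bigl| \phi(x_t,p)^{\transp}(\hth_{t-1} - \theta) \bigr| \le B_{t-1}(\delta t^{-2}) \bnorm{V_{t-1}^{-1/2}\phi(x_t,p)} = \sqrt{\beta_{t,p}}\,.
\]
This is the same deviation control as in the proof of Proposition~\ref{prop:1}, only without the clipping, which is unnecessary here.

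Next I would convert the definition~\eqref{eq:optialgo2} of $p_t$ into an upper bound on $\tl_{t,p_t}$. By attainability there is $p_t^\star \in \cP$ with $\phi(x_t,p_t^\star)^{\transp}\theta = c_t$, so on the good event $\tl_{t,p_t^\star} = \bigl(\phi(x_t,p_t^\star)^{\transp}(\hth_{t-1}-\theta)\bigr)^2 \le \beta_{t,p_t^\star}$, whence $\tl_{t,p_t^\star} - \beta_{t,p_t^\star} \le 0$. Since $p_t$ minimizes $\tl_{t,p} - \beta_{t,p}$, this forces $\tl_{t,p_t} - \beta_{t,p_t} \le 0$, i.e. $\bigl|\phi(x_t,p_t)^{\transp}\hth_{t-1} - c_t\bigr| \le \sqrt{\beta_{t,p_t}}$. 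Combining with the first display through the triangle inequality yields the key estimate
\[
\bigl( \phi(x_t,p_t)^{\transp}\theta - c_t \bigr)^2 \le \bigl( 2\sqrt{\beta_{t,p_t}} \bigr)^2 = 4\,B_{t-1}(\delta t^{-2})^2 \bnorm{V_{t-1}^{-1/2}\phi(x_t,p_t)}^2\,.
\]
This is the step the statement alludes to: the instantaneous regret is controlled by the \emph{squared} confidence width, not the width itself.

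It then remains to sum over $t$. By~\eqref{eq:csq-att} the per-round regret equals exactly $\bigl(\phi(x_t,p_t)^{\transp}\theta - c_t\bigr)^2$, and $B_{t-1}(\delta t^{-2}) \le \oB$ for every $t \le T$, so I would reduce matters to bounding $\sum_t \bnorm{V_{t-1}^{-1/2}\phi(x_t,p_t)}^2$. For this I invoke the elliptical-potential argument already underlying Lemma~\ref{lm:main}: from $\det V_t = \det V_{t-1}\bigl(1 + \norm{\phi(x_t,p_t)}_{V_{t-1}^{-1}}^2\bigr)$ and $x \le 2\log(1+x)$ on $[0,1]$ one gets $\sum_t \min\bigl\{1,\norm{\phi(x_t,p_t)}_{V_{t-1}^{-1}}^2\bigr\} \le 2\log\bigl(\det V_T/\det V_0\bigr) \le 2d\log\frac{\lambda+T}{\lambda}$. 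To pass from $\beta_{t,p_t}$ to the truncated sum I also use the crude bound $\bigl(\phi(x_t,p_t)^{\transp}\theta - c_t\bigr)^2 \le C^2$ coming from $\phi^{\transp}\theta, c_t \in [0,C]$, which absorbs the rounds where the width exceeds $1$ and yields the $C^2/2$ term; the round $t=1$, where $p_1$ is arbitrary, is handled by the same crude bound. Collecting constants gives $\oR_T \le d\bigl(4\oB^2 + C^2/2\bigr)\log\frac{\lambda+T}{\lambda}$.

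The main obstacle, and the only place the new assumption enters, is the second step: without attainability, $\tl_{t,p_t^\star}$ carries an irreducible bias $\bigl(\phi(x_t,p_t^\star)^{\transp}\theta - c_t\bigr)^2$ that does not vanish, the per-round regret degrades to a quantity linear in the width $\sqrt{\beta_{t,p_t}}$, and Cauchy--Schwarz then recovers only the $\sqrt{T}$ rate. The rest is bookkeeping on constants, namely matching the potential-lemma factor with the combination of the confidence and trivial bounds to land exactly on $4\oB^2 + C^2/2$.
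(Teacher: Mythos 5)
Your proposal is correct and follows essentially the same route as the paper's proof: same good event via Lemma~\ref{lm:conc-theta} with confidence $\delta t^{-2}$ and a union bound, same exploitation of attainability to force $\tl_{t,p_t}-\beta_{t,p_t}\leq \tl_{t,p_t^\star}-\beta_{t,p_t^\star}\leq 0$ and hence a per-round regret of $4\beta_{t,p_t}$, and the same elliptical-potential summation truncated by $C^2$. The only (cosmetic) difference is that you reach $\ell_{t,p_t}\leq 4\beta_{t,p_t}$ via the triangle inequality on $\bigl|\phi(x_t,p_t)^{\transp}\theta-c_t\bigr|\leq 2\sqrt{\beta_{t,p_t}}$ and then squaring, whereas the paper uses $(u+v)^2\leq 2u^2+2v^2$; and the exact constant $4\oB^2+C^2/2$ comes from the paper's inequality $\min\{b,u\}\leq b\ln(1+u)/\ln(1+b)$ combined with $1/\ln(1+u)\leq 1/u+1/2$, rather than the $b=1$ normalization you sketch, but as you note this is only bookkeeping.
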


\section{Simulations}
\label{sec:simus}
Our simulations rely on a real data set of residential electricity consumption, in which
different tariffs were sent to the customers according to some policy.
But of course, we cannot test an alternative policy on historical data (we only
observed the outcome of the tariffs sent) and therefore need to build first a data
simulator.

\subsection{The Underlying Real Data Set / The Simulator}
\label{subsec:simulator}

We consider open data published\footnote{\textit{SmartMeter Energy Consumption Data in London
Households} -- see https://data.london.gov.uk/dataset/smartmeter-energy-use-data-in-london-households} by UK Power Networks and containing energy consumption (in kWh per half hour) at half hourly
intervals of a thousand customers subjected to dynamic energy prices. A single tariff (among High--1, Normal--2 or Low--3)
was offered to all customers for each half hour and
was announced in advance.
The report by \citet{schofield2014residential} provides a full description of this experimentation and an exhaustive analysis of results.
We only kept customers with more than $95\%$ of data available ($980$ clients) and considered their mean\footnote{Only such a level of aggregation allows
a proper estimation (individual consumptions are erratic); \citealp{Sevlian18}.} consumption.
As far as contexts are concerned, we
considered half-hourly temperatures $\tau_t$ in London, obtained from
https://www.noaa.gov/ (missing data managed by linear interpolation).
We also created calendar variables: the day of the week $w_t$
(equal to $1$ for Monday, $2$ for Tuesday, etc.),
the half-hour of the day $h_t \in \{1,\dots,48\}$,
and the position in the year: $y_t \in [0,1]$,
linear values between $y_t=0$ on January 1st at 00:00 and $y_t=1$ on December the 31st at 23:59.

\textbf{Realistic simulator.}~~It
is based on the following additive model, which breaks down time by half hours: \vspace{-.3cm}
\begin{multline}
\smash{\textstyle{\varphi(x_t,j) = \sum_{h=1}^{48} \big[f^{\tau}_h(\tau_t) + f_h^y(y_t) + \eta_{h} \big] \ind{h_t=h}} } \mystrut(10,0)\\
\textstyle{+ \sum_{w=1}^7 \zeta_{w}  \ind{w_t=w} + \xi_j\,,}
\label{eq:gam}
\end{multline}
\vspace*{-.85cm}

where the $f^{\tau}_h$ and $f^{y}_h$ are functions catching the effect of the temperature and of the yearly seasonality.
As explained in Example~\ref{ex:2}, the transfer parameter $\theta$ gathers coordinates of
the $f^{\tau}_h$ and the $f^{y}_h$ in bases of splines, as well as the coefficients $\eta_h$, $\zeta_w$ and $\xi_j$.
Here, we work under the assumption that exogenous factors do not impact customers' reaction to tariff changes
(which is admittedly a first step, and more complex models could be considered).
Our algorithms will have to sequentially estimate the parameter~$\theta$, but we also
need to set it to get our simulator in the first place. We do so by exploiting historical data
together with the allocations of prices picked, of the form $(0,1,0)$, $(1,0,0)$
and $(0,0,1)$ only on these data (all customers were getting the same tariff),
and apply the formula~\eqref{eq:gam} through the R--package \texttt{mgcv}
(which replaces the $\lambda$ identity matrix with a slightly more complex definite positive matrix $S$,
see \citealp{wood2006generalized}).
The deterministic part of the obtained model is realistic enough: its adjusted R-square on
historical observations equals $92\%$ while its mean absolute percentage of error equals $8.82\%$.
Now, as far as noise is concerned,
we take multivariate Gaussian noise vectors $\epsilon_t$,
where the covariance matrix $\Gamma$ was built again based on realistic values.
The diagonal coefficients $\Gamma_{j,j}$ are given by the empirical variance of the residuals associated with tariff~$j$,
while non-diagonal coefficients $\Gamma_{j,j'}$ are given by the empirical covariance
between residuals of tariffs $j$ and $j'$ at times $t$ and $t \pm 48$; 
this matrix $\Gamma$ has no special form, see Appendix~\ref{app:GammaEst} for more details
and its numerical expression. \vspace{-7pt}

\subsection{Experiment Design: Learning Added Effects $\xi_j$}

{\bfseries Target creation.}~~We
focus on attainable targets $c_t$, namely, $\varphi(x_t,1) \leq c_t\leq \varphi(x_t,3)$.
To smooth consumption, we pick $c_t$ near $\varphi(x_t,3)$ during the night and near $\varphi(x_t,1)$ in the evening.
These hypotheses can be seen as an ideal configuration where targets and customers portfolio are in a way compatible.

{\bfseries $\mathcal{P}$ restriction.}~~We
assume that the electricity provider cannot send Low and High tariffs at the same round and that population can be split in $N=100$ equal subsets.
Thus, $\mathcal{P}$ is restricted to the grid  \\[3pt]
$ \hspace*{3pt}
 \smash{\textstyle{\Big\{\big(\frac{i}{N},1\!-\!\frac{i}{N},0\big),\,\,\big(0,\frac{i}{N},1\!-\!\frac{i}{N}\!\big),
 \quad i \in \{0,\dots,N\} \Big\}}}
$

\textbf{Training period, testing period.}~~We
create one year of data using historical contexts and assume
that only Normal tariffs are picked at first: $p_t = (0,1,0)$; this is a training period,
which corresponds to what electricity providers are currently doing.
As they can accurately estimate the covariance matrix~$\Gamma$ by ad-hoc methods,
we assume that the algorithm knows the matrix $\Gamma$ used by the simulator.
Then the provider starts exploring the effects of tariffs for an additional month (a January month, based on the historical
contexts) and freely picks the $p_t$ according to our algorithm; this is the testing period.
The estimation of $\theta$ in this testing period is still performed via the formula~\eqref{eq:gam} and as indicated above (with
the \texttt{mgcv} package), including the year when only $p_t = (0,1,0)$ allocations were picked.
For learning to then focus on the
parameters $\xi_j$, as other parameters were decently estimated in the training period,
we modify the exploration term $\alpha_{t,p}$ of~\eqref{eq:optialgo} into \\[3pt]
$
	\hspace*{40pt} \alpha_{t,p} = \, 2 C B_{t-1}(\delta t^{-2}) \|\tilde{V}_{t-1}^{-1/2} p_t\| \,,
$ \\[3pt]
with $\tilde{V}_{t-1}= \lambda I_d + \sum_{s=1}^{t-1}p_sp_s^{\transp}$. We pick a convenient~$\lambda$.

\subsection{Results}

Algorithms were run $200$ times each.
The simplest set of results is provided in Figure~\ref{fig3}:
the regrets suffered on each run are compared to the theoretical orders of magnitude of the regret bounds.
As expected, we observe a lower regrets for Model~2.
The bottom parts of Figures~\ref{fig1}--\ref{fig2}
indicate, for a single run, which allocation vectors $p_t$ were picked over time.
During the first day of the testing period,
the algorithms explore\footnote{Note that, over the first iterations, the exploration term for Model~2 is much larger than the
exploitation term (but quickly vanishes), which leads to an initial quasi-deterministic exploration and
an erratic consumption (unlike in Model~1).} the effect of tariffs by sending the same tariff to all customers (the $p_t$
vectors are Dirac masses) while at the end of the testing period, they
cleverly exploit the possibility to split the population in two groups of tariffs.
We obtain an approximation of the expected mean consumption $\varphi(x_t,p_t)$
by averaging the $200$ observed consumptions, and this is the main (black, solid)
line to look at in the top parts of Figures~\ref{fig1}--\ref{fig2}.
Four plots are depicted depending on the day of the testing period (first, last) and of the model
considered. These (approximated) expected mean consumptions may be compared to
the targets set (dashed red line). The algorithms seem to perform better on
the last day of the testing period for Model~2 than for Model~1 as the expected mean consumption
seems closer to the target.
However, in Model~1, the algorithm has to pick tariffs leading to the best bias-variance trade-off
(the expected loss features a variance term). This is why the average consumption does not overlap the target as in Model~2.
This results in a slightly biased estimator of the mean consumption in Model~1.

\begin{figure*}[p]
\centering
\includegraphics[width=.42\textwidth]{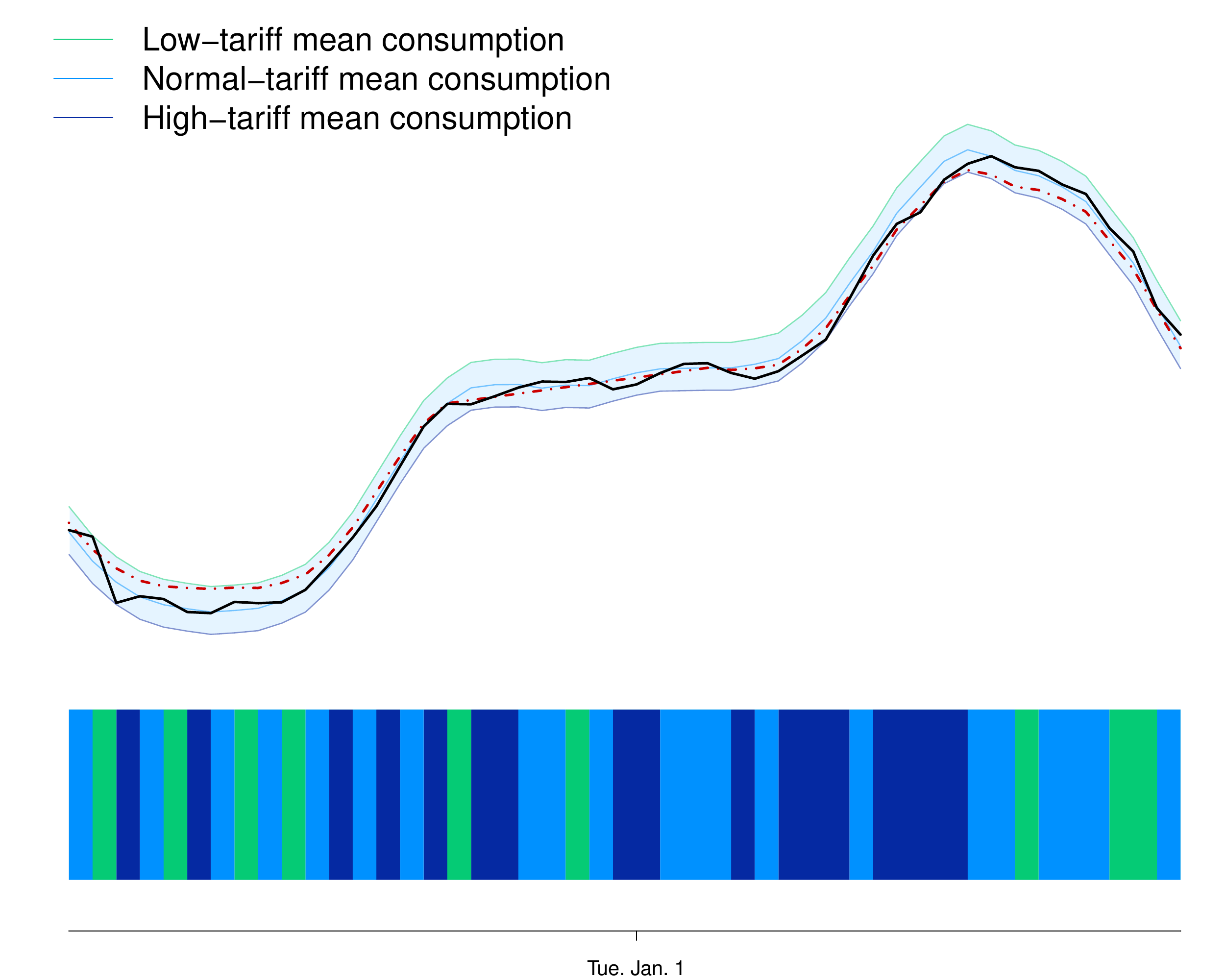}
\includegraphics[width=.42\textwidth]{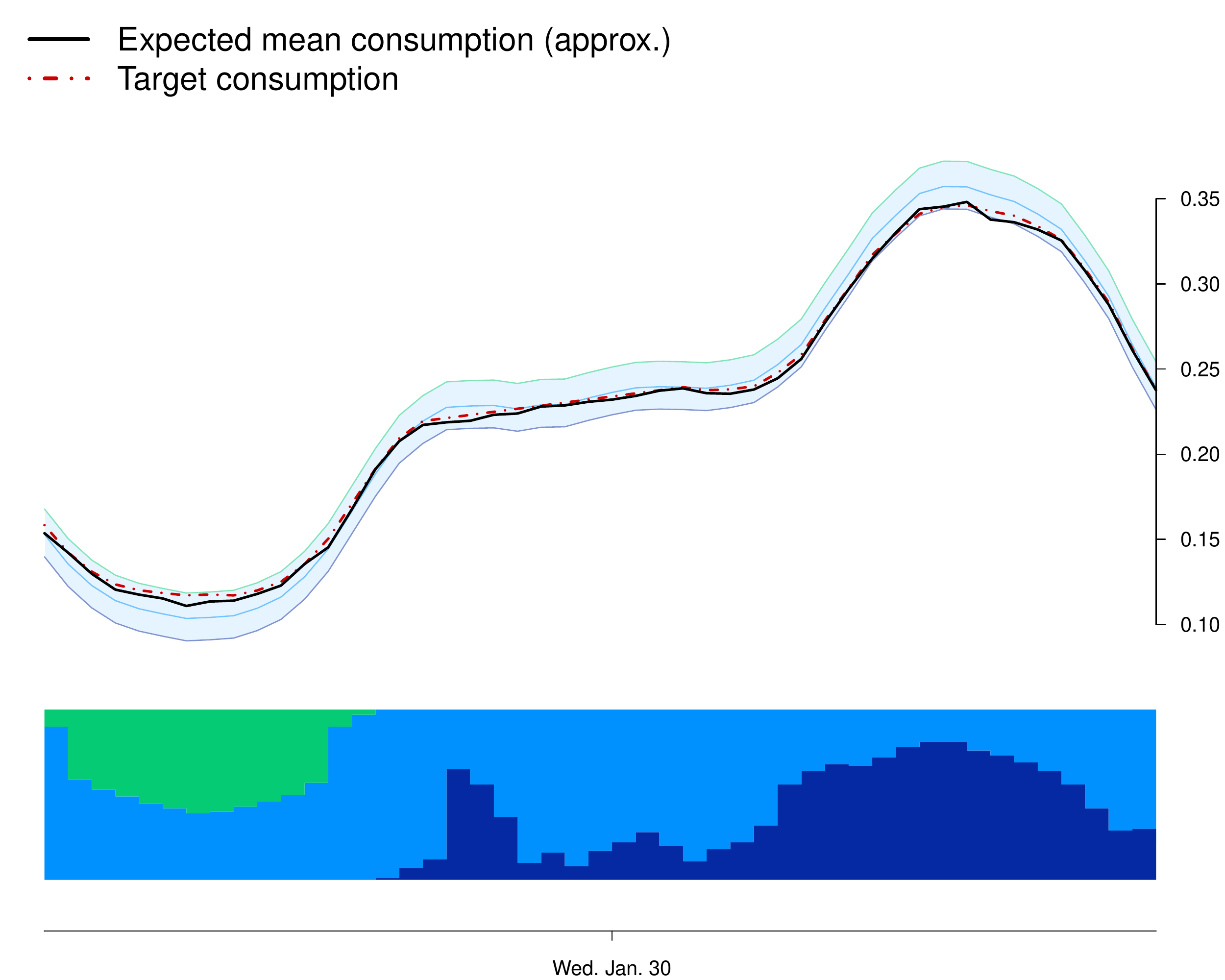}
\caption{\qquad \emph{Left}: January 1st (first day of the testing set). \qquad \qquad \emph{Right}: January 30th (last day of the testing set). \smallskip \\
\emph{Top}: $200$ runs are considered. Plot: average of mean consumptions over $200$ runs for the algorithm associated with Model~1 (full black line);
target consumption (dashed red line); mean consumption associated with each tariff (Low--1 in green, Normal--2 in blue and High--3 in navy).
The envelope of attainable targets is in pastel blue. \smallskip \\
\emph{Bottom}: A single run is considered. Plot: proportions $p_t$ used over time.}
\label{fig1}
\end{figure*}

\begin{figure*}[p]
\centering
\includegraphics[width=.42\textwidth]{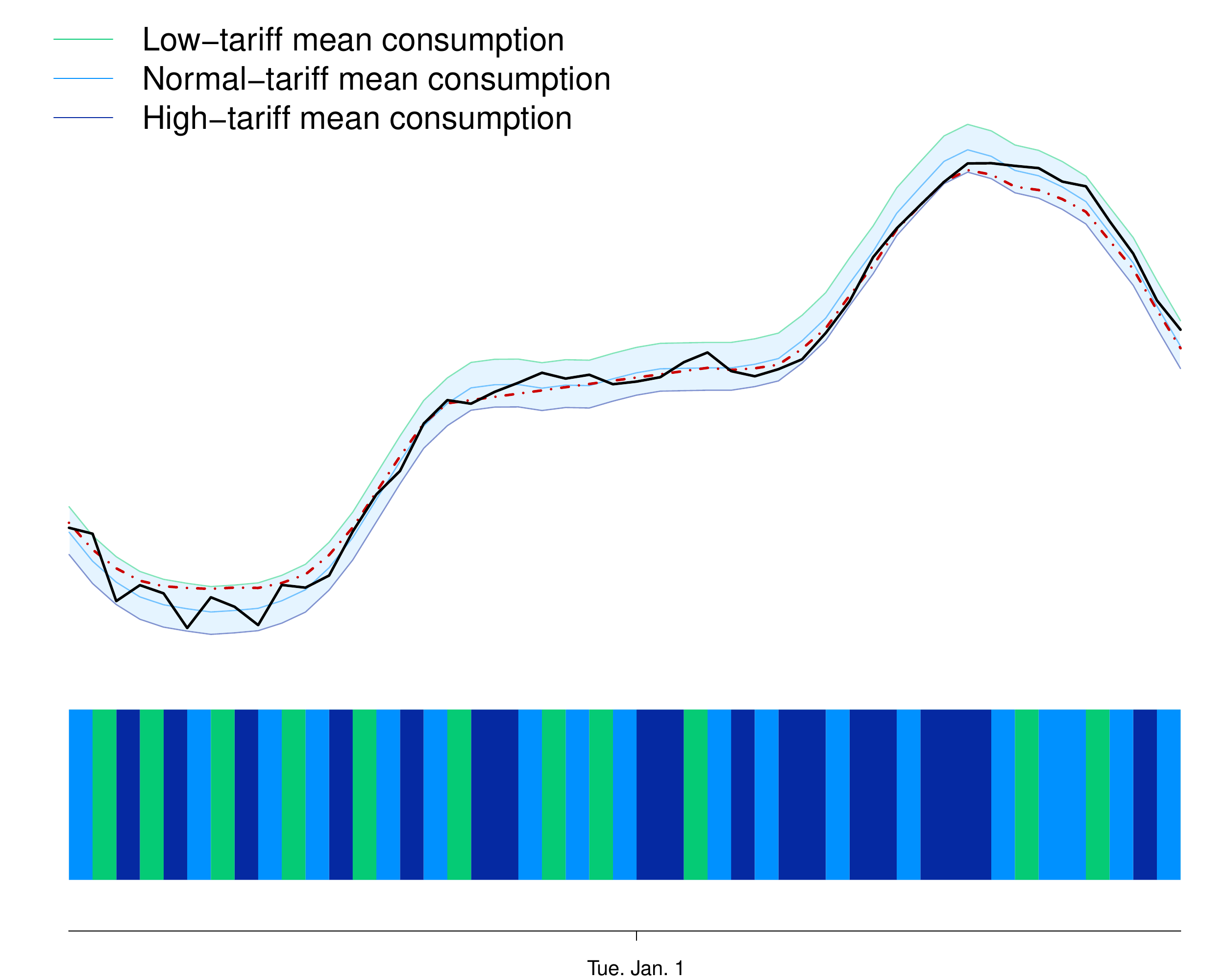}
\includegraphics[width=.42\textwidth]{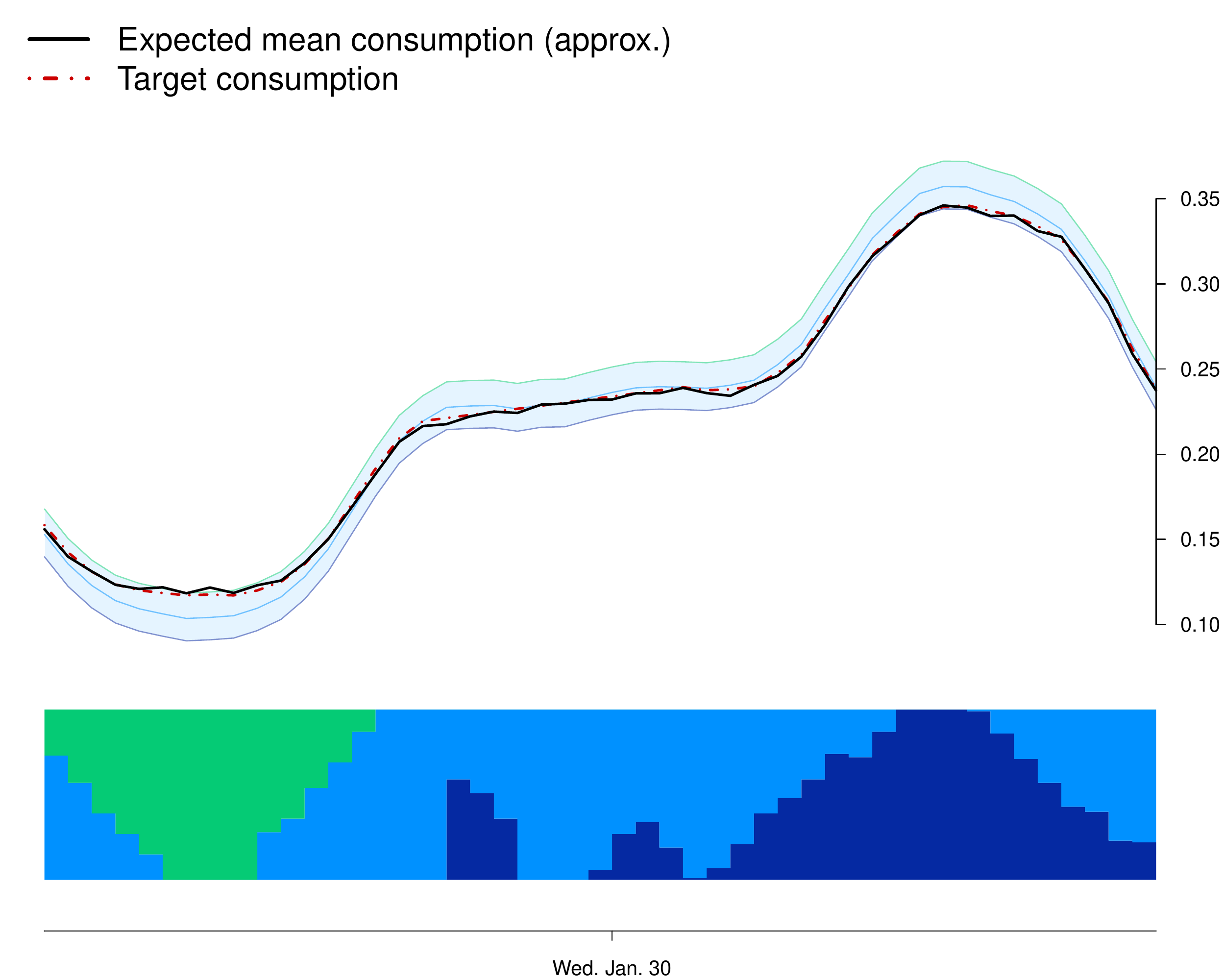}
\caption{Same legend, but with Model~2 (full black line). \hfill \ \\ \ \\ }
\label{fig2}
\end{figure*}

\begin{figure*}[p]
\centering
\includegraphics[width=.42\textwidth]{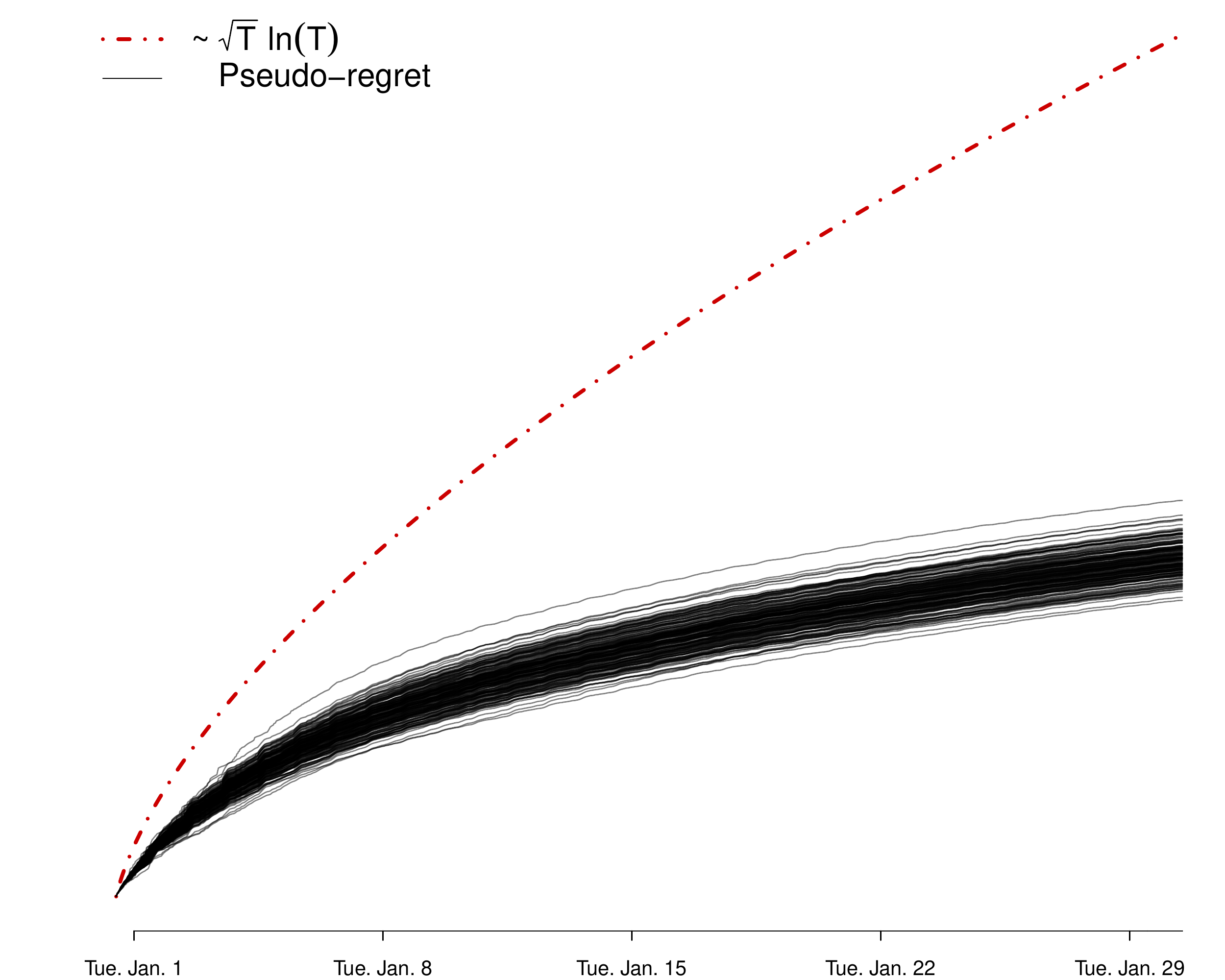}
\includegraphics[width=.42\textwidth]{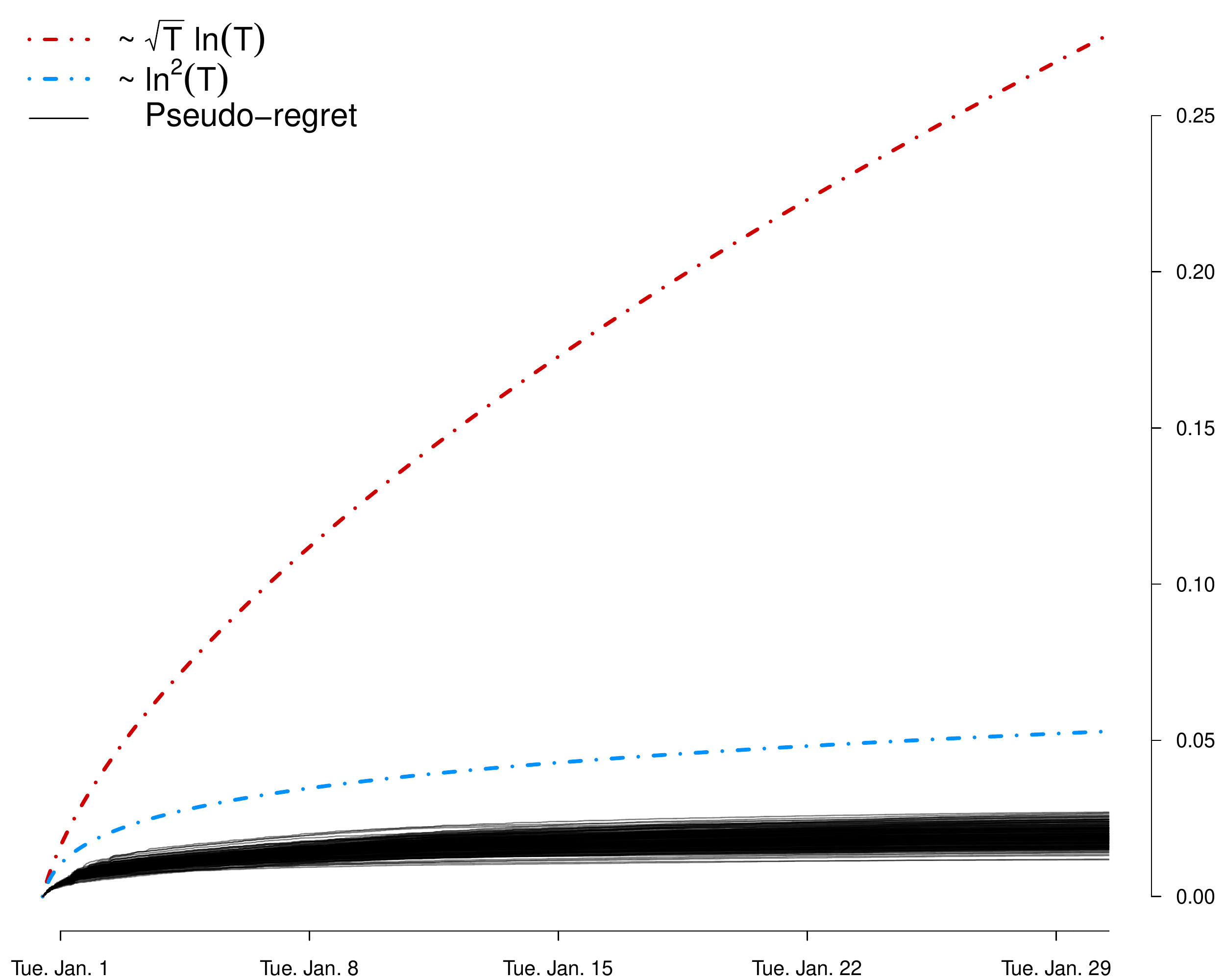}
\caption{Regret curves for each of the $200$ runs for Model~1 (\emph{left}) and Model~2 (\emph{right}).
We also provide plots of $c \sqrt{T} \ln T$ and $c' \ln^2(T)$ for some well-chosen constants $c,c' > 0$;
these are the rates to be considered as the covariance matrix $\Gamma$ is assumed to be known.}
\label{fig3}
\end{figure*}

\newpage
\bibliography{BGGS19-PowerConsumptionControl-Biblio}
\bibliographystyle{icml2019}

\newpage
\twocolumn[
\icmltitle{Target Tracking for Contextual Bandits: \\
           Application to Demand Side Management\\ \ \\ Supplementary material}

\begin{center}
\textbf{Margaux Br{\'e}g{\`e}re \qquad Pierre Gaillard \qquad Yannig Goude \qquad Gilles Stoltz}
\end{center}

\vskip 0.3in
]

\appendix

We provide the proofs in
order of appearance of the corresponding result: \smallskip \newline
\phantom{esp}-- The proof of Lemma~\ref{lm:conc-theta} in Appendix~\ref{app:conc-theta} \newline
\phantom{esp}-- The proof of Proposition~\ref{prop:1} in Appendix~\ref{sec:proofprop1} \newline
\phantom{esp}-- The proof of Lemma~\ref{lem:gamma} in Appendix~\ref{app:gamma} \newline
\phantom{esp}-- The proof of Lemma~\ref{lm:main} in Appendix~\ref{sec:prooflmmain} \newline
\phantom{esp}-- The proof of Theorem~\ref{th:2} in Appendix~\ref{sec:th2} \smallskip \newline
We also give more details on the numerical expression of the covariance matrix $\Gamma$ built in the experiments (see Section~\ref{subsec:simulator})
based on real data: \smallskip \newline
\phantom{esp}-- Details on the covariance matrix $\Gamma$ in Appendix~\ref{app:GammaEst}. \newline

\newpage
\section{Proof of Lemma~\ref{lm:conc-theta}}
\label{app:conc-theta}

The proof below relies on Laplace's method on super-martingales, which is a standard argument
to provide confidence bounds on a self-normalized sum of conditionally centered random vectors.
See Theorem~2 of \citet{abbasi2011improved} or Theorem~20.2 in the monograph by \citet{lattimore2018bandit}.
Under Model~1 and given the definition of $V_t$, we have the rewriting 
\begin{align*}
\hat{\theta}_t &= V_t^{-1}\sum_{s=1}^t \phi(x_s,p_s) Y_{s,p_s} \\
& = V_t^{-1} \sum_{s=1}^t \phi(x_s,p_s)\big(\phi(x_s,p_s)^{\transp} \theta + p_s^{\transp} \epsilon_s \big) \\
&  = V_t^{-1} \big( (V_t - \lambda \Id) \theta + M_t \big)
= \theta - \lambda V_t^{-1} \theta + V_t^{-1} M_t\,,
\end{align*}
where we introduced
\[
M_t = \sum_{s=1}^t \phi(x_s,p_s) p_s^{\transp} \epsilon_s\,,
\]
which is a martingale with respect to $\cF_t = \sigma(\epsilon_1,\ldots,\epsilon_t)$.
Therefore, by a triangle inequality,
\begin{align*}
\bigl\Arrowvert V_t^{1/2} \bigl( \hth_t - \theta \bigr) \bigr\Arrowvert
& = \bigl\Arrowvert - \lambda V_t^{-1/2} \theta + V_t^{-1/2} M_t \Arrowvert \\
& \leq  \lambda \bigl\Arrowvert V_t^{-1/2} \theta \bigr\Arrowvert
+ \bigl\Arrowvert V_t^{-1/2} M_t \bigr\Arrowvert \,.
\end{align*}
On the one hand, given that all eigenvalues of the symmetric
matrix $V_t$ are larger than $\lambda$ (given the $\lambda \Id$
term in its definition), all eigenvalues of $V_t^{-1/2}$ are
smaller than $1/\sqrt{\lambda}$ and thus,
\[
\lambda \bigl\Arrowvert V_t^{-1/2} \theta \bigr\Arrowvert
\leq \lambda \, \frac{1}{\sqrt{\lambda}} \Arrowvert\theta\Arrowvert
= \sqrt{\lambda} \Arrowvert\theta\Arrowvert\,.
\]
We now prove, on the other hand, that with probability at least $1-\delta$,
\[
\bigl\Arrowvert V_t^{-1/2} M_t \bigr\Arrowvert \leq
\rho \sqrt{2 \ln \frac{1}{\delta} + d \log \frac{1}{\lambda} + \log \det(V_t)}\,,
\]
which will conclude the proof of the lemma.

\emph{Step 1: Introducing super-martingales.} For all
$\nu \in \R^d$, we consider
\[
S_{t,\nu} = \exp\biggl( \nu^{\transp} M_t - \frac{\rho^2}{2} \nu^{\transp} V_t \nu \biggr)
\]
and now show that it is an $\cF_t$--super-martingale.
First,
note that since the common distribution of the $\epsilon_1,\epsilon_2,\ldots$ is
$\rho$--sub-Gaussian, then for all $\cF_{t-1}$--measurable random vectors~$\nu_{t-1}$,
\begin{equation}
\label{eq:subG-csq}
\E \Bigl[ \e^{\nu_{t-1}^{\transp} \epsilon_t} \, \Big| \, \cF_{t-1} \Bigr] \leq \e^{\rho^2 \norm{\nu_{t-1}}^2 /2}\,.
\end{equation}
Now,
\begin{multline*}
S_{t,\nu} = S_{t-1,\nu} \,
\exp\biggl( \nu^{\transp} \phi(x_t,p_t) p_t^{\transp} \epsilon_t \\ - \frac{\rho^2}{2} \nu^{\transp} \phi(x_t,p_t) \phi(x_t,p_t)^{\transp} \nu \biggr)
\end{multline*}
where, by using the sub-Gaussian assumption~\eqref{eq:subG-csq} and the fact that
$\sum_j p_{j,t}^2 \leq 1$ for all convex weight vectors $p_t$,
\vspace*{-10pt}
\begin{multline*}
 \E \Bigl[ \exp\bigl( \nu^{\transp} \phi(x_t,p_t) p_t^{\transp} \epsilon_t \, \Big| \, \cF_{t-1} \Bigr] \\
\leq \  \exp\biggl( \frac{\rho^2}{2} \nu^{\transp} \phi(x_t,p_t) \smash{\underbrace{p_t^{\transp} p_t}_{\leq 1}} \phi(x_t,p_t)^{\transp} \nu \biggr) \,.
\end{multline*}
This implies $\E\big[S_{t,\nu}\big|\cF_{t-1}\big] \leq S_{t-1,\nu}$.

Note that the rewriting of $S_{t,\nu}$ in its vertex form is,
with $m = V_t^{-1} M_t / \rho^2$:
\begin{align*}
S_{t,\nu} & = \exp\biggl( \frac{1}{2} (\nu - m)^{\transp} \, \rho^2 V_t \, (\nu - m) + \frac{1}{2} m^{\transp} \rho^2 V_t \, m \biggr) \\
& = \exp\biggl( \frac{1}{2} (\nu - m)^{\transp} \, \rho^2 V_t \, (\nu - m) \biggr) \\
& \qquad \qquad \qquad \times \exp \biggr( \frac{1}{2\rho^2} \bigl\Arrowvert V_t^{-1/2} M_t \bigr\Arrowvert^2 \biggr).
\end{align*}

\emph{Step 2: Laplace's method---integrating $S_{t,\nu}$ over $\nu \in \R^d$.}
The basic observation behind this method is that (given the vertex form)
$S_{t,\nu}$ is maximal at $\nu = m = V_t^{-1} M_t / \rho^2$ and then equals
$\exp\bigl( \bigl\Arrowvert V_t^{-1/2} M_t \bigr\Arrowvert^2/(2\rho^2) \bigr)$,
which is (a transformation of) the quantity to control. Now, because the $\exp$ function
quickly vanishes, the integral over $\nu \in \R^d$ is close to this maximum.
We therefore consider
\[
\oS_t = \int_{\R^d} S_{t,\nu} \d\nu\,.
\]
We will make repeated uses of the fact that the Gaussian density functions,
\[
\nu \longmapsto \frac{1}{\sqrt{\det(2\pi C)}} \, \exp \biggl( (\nu - m)^{\transp} C^{-1} (\nu - m) \biggr),
\]
where $m \in \R^d$ and $C$ is a (symmetric) positive-definite matrix, integrate to~$1$ over $\R^d$.
This gives us first the rewriting
\[
\oS_t = \sqrt{\det\bigl( 2 \pi \rho^{-2} V_t^{-1} \bigr)} \, \exp \biggr( \frac{1}{2\rho^2} \bigl\Arrowvert V_t^{-1/2} M_t \bigr\Arrowvert^2 \biggr).
\]
Second, by the Fubini-Tonelli theorem and the super-martingale property
\[
\E\bigl[S_{t,\nu}\bigr] \leq \E\bigl[S_{0,\nu}\bigr] = \exp \bigl( - \lambda \rho^2 \norm{\nu}^2 / 2 \bigr)\,,
\]
we also have
\begin{multline*}
\E\bigl[\oS_t\bigr] \leq \int_{\R^d} \exp \bigl( - \lambda \rho^2 \norm{\nu}^2 / 2 \bigr) \d\nu \\
= \sqrt{\det\bigl( 2\pi \rho^{-2} \lambda^{-1} \Id \bigr)}\,.
\end{multline*}
Combining the two statements, we proved
\[
\E \Biggl[ \exp \biggr( \frac{1}{2\rho^2} \bigl\Arrowvert V_t^{-1/2} M_t \bigr\Arrowvert^2 \biggr) \Biggr]
\leq \sqrt{\frac{\det\bigl( V_t \bigr)}{\lambda^d}}\,.
\]

\emph{Step 3: Markov-Chernov bound.} For $u > 0$,
\begin{align*}
\lefteqn{\P \Bigl[ \bigl\Arrowvert V_t^{-1/2} M_t \bigr\Arrowvert > u \Bigr]} \\
& = \P \biggl[ \frac{1}{2\rho^2} \bigl\Arrowvert V_t^{-1/2} M_t \bigr\Arrowvert^2 > \frac{u^2}{2\rho^2} \biggr] \\
& \leq \exp \biggl( - \frac{u^2}{2\rho^2} \biggr) \, \E \Biggl[ \exp \biggr( \frac{1}{2\rho^2} \bigl\Arrowvert V_t^{-1/2} M_t \bigr\Arrowvert^2 \biggr) \Biggr] \\
& \leq \exp \biggl( - \frac{u^2}{2\rho^2} + \frac{1}{2} \ln \frac{\det\bigl( V_t \bigr)}{\lambda^d} \biggr) = \delta
\end{align*}
for the claimed choice
\[
u = \rho \sqrt{2 \ln \frac{1}{\delta} + d \ln \frac{1}{\lambda} + \ln \det(V_t)}\,.
\]

\newpage
\section{Proof of Proposition~\ref{prop:1}}
\label{sec:proofprop1}

\begin{comm}
The main difference with the regret analysis of LinUCB provided by~\citet{chu2011contextual} or ~\citet{lattimore2018bandit} is in the first
part of \emph{Step~1}, as we need to deal with slightly more complicated quantities: not just with linear quantities of
the form $\phi(x_t,p)^{\transp} \theta$. Steps~2 and~3 are easy consequences of Step~1.
\end{comm}

We show below (\emph{Step~1}) that for all $t \geq 2$, if
\begin{multline}
\label{eq:bothevents}
\bigl\Arrowvert V_{t-1}^{1/2} \bigl( \hth_{t-1} - \theta \bigr) \bigr\Arrowvert \leq B_{t-1}(\delta t^{-2}) \\
\mbox{and} \qquad
\bnorm{\Gamma - \hat{\Gamma}_t}_\infty \leq \gamma\,,
\end{multline}
then
\begin{equation}
\label{eq:estim-ell}
\forall p \in \cP, \qquad \bigl| \ell_{t,p} - \hl_{t,p} \bigl| \leq \alpha_{t,p}\,.
\end{equation}
Property~\eqref{eq:estim-ell}, for those $t$ for which it is satisfied, entails (\emph{Step~2}) that
the corresponding instantaneous regrets are bounded by
\[
r_t \defeq \ell_{t,p_t} - \min_{p \in \cP} \ell_{t,p} \leq 2 \alpha_{t,p_t}\,.
\]
It only remains to deal (\emph{Step~3}) with the rounds $t$ when~\eqref{eq:estim-ell}
does not hold; they account for the $1-\delta$ confidence level.

\emph{Step~1: Good estimation of the losses.} When the two
events~\eqref{eq:bothevents} hold, we have
\begin{align*}
\lefteqn{\bigl| \ell_{t,p} - \hl_{t,p} \bigr|} \\
& = \biggl| \bigl( \phi(x_t,p)^{\transp} \theta - c_t \bigr)^2 + p^{\transp} \Gamma p \\
& \qquad - \Bigl( \bigl[ \phi(x_t,p)^{\transp} \hth_{t-1} \bigr]_C - c_t \Bigr)^2 + p^{\transp} \hat{\Gamma}_t p \biggr| \\
& \leq \bigl| p^{\transp} \Gamma p - p^{\transp} \hat{\Gamma}_t p \bigr| \\
& \qquad + \biggl| \bigl( \phi(x_t,p)^{\transp} \theta - c_t \bigr)^2
- \Bigl( \bigl[ \phi(x_t,p)^{\transp} \hth_{t-1} \bigr]_C - c_t \Bigr)^2 \biggr|.
\end{align*}
On the one hand, $\bigl| p^{\transp} \Gamma p - p^{\transp} \hat{\Gamma}_t p \bigr| \leq \gamma$
while on the other hand,
\begin{align*}
& \biggl| \bigl( \phi(x_t,p)^{\transp} \theta - c_t \bigr)^2
- \Bigl( \bigl[ \phi(x_t,p)^{\transp} \hth_{t-1} \bigr]_C - c_t \Bigr)^2 \biggr| \\
& = \Bigl| \phi(x_t,p)^{\transp} \theta - \bigl[ \phi(x_t,p)^{\transp} \hth_{t-1} \bigr]_C \Bigr| \\
& \qquad \times \Bigl| \phi(x_t,p)^{\transp} \theta + \bigl[ \phi(x_t,p)^{\transp} \hth_{t-1} \bigr]_C - 2 c_t\Bigr|\,,
\end{align*}
where by the boundedness assumptions~\eqref{eq:boundedness},
all quantities in the final inequality lie in $[0,C]$, thus
\[
\Bigl| \phi(x_t,p)^{\transp} \theta + \bigl[ \phi(x_t,p)^{\transp} \hth_{t-1} \bigr]_C - 2 c_t\Bigr| \leq 2C\,.
\]
Finally,
\begin{align}
& \Bigl| \phi(x_t,p)^{\transp} \theta - \bigl[ \phi(x_t,p)^{\transp} \hth_{t-1} \bigr]_C \Bigr| \nonumber \\
& \leq \bigl| \phi(x_t,p)^{\transp} \theta - \phi(x_t,p)^{\transp} \hth_{t-1} \bigr| \nonumber \\
\label{eq:CS-phitheta}
& \leq \Bnorm{V_{t-1}^{1/2} \bigl( \theta - \hth_{t-1} \bigr)} \, \bnorm{V_{t-1}^{-1/2} \phi(x_t,p)}\,,
\end{align}
where we used the Cauchy-Schwarz inequality for the second inequality, and the fact
that $\bigl| y - [x]_C \bigr| \leq |y - x|$ when $y \in [0,C]$ and $x \in \R$ for the first inequality.
Collecting all bounds together, we proved
\begin{multline*}
\biggl| \bigl( \phi(x_t,p)^{\transp} \theta - c_t \bigr)^2
- \Bigl( \bigl[ \phi(x_t,p)^{\transp} \hth_{t-1} \bigr]_C - c_t \Bigr)^2 \biggr| \\
\leq 2C \, \underbrace{\Bnorm{V_{t-1}^{1/2} \bigl( \theta - \hth_{t-1} \bigr)}}_{\leq B_{t-1}(\delta t^{-2})} \, \bnorm{V_{t-1}^{-1/2} \phi(x_t,p)}\,,
\end{multline*}
but of course, this term is also bounded by the quantity $L$ introduced in Section~\ref{sec:bds}.
This concludes the proof of the claimed inequality~\eqref{eq:estim-ell}.

\emph{Step~2: Resulting bound on the instantaneous regrets.}
We denote by
\begin{equation}
\label{eq:pstar}
p_t^\star \in \argmin_{p \in \cP} \bigl\{ \ell_{t,p} + p^{\transp} \Gamma p \bigr\}
\end{equation}
an optimal convex vector to be used at round~$t$. By definition~\eqref{eq:optialgo}
of the optimistic algorithm, we have that the played $p_t$ satisfies
\begin{align*}
\hl_{t,p_t} - \alpha_{t,p_t} & \leq \hl_{t,p_t^\star} - \alpha_{t,p_t^\star}\,, \\
\qquad \mbox{that is}, \qquad
\hl_{t,p_t} - \hl_{t,p_t^\star} & \leq \alpha_{t,p_t} - \alpha_{t,p_t^\star}\,.
\end{align*}
Now, for those $t$ for which both events~\eqref{eq:bothevents} hold,
the property~\eqref{eq:estim-ell} also holds and yields,
respectively for $p = p_t$ and $p = p_t^\star$:
\[
\ell_{t,p_t} - \hl_{t,p_t} \leq \alpha_{t,p_t}
\qquad \mbox{and} \qquad
\hl_{t,p^\star_t} - \ell_{t,p^\star_t} \leq \alpha_{t,p^\star_t}\,.
\]
Combining all these three inequalities together, we proved
\begin{align*}
r_t & = \ell_{t,p_t} - \ell_{t,p^\star_t} \\
& = \bigl( \ell_{t,p_t} - \hl_{t,p_t} \bigr)
+ \bigl( \hl_{t,p_t} - \hl_{t,p_t^\star} \bigr)
+ \bigl( \hl_{t,p_t^\star} - \ell_{t,p^\star_t} \bigr) \\
& \leq \alpha_{t,p_t} + (\alpha_{t,p_t} - \alpha_{t,p_t^\star}) + \alpha_{t,p^\star_t} = 2\alpha_{t,p_t}\,,
\end{align*}
as claimed. This yields the $2 \sum \alpha_{t,p_t}$ in the regret bound, where the sum is for $t \geq n+1$.

\emph{Step~3: Special cases.}
We conclude the proof by dealing with the time steps $t \geq n+1$ when at least one of the events~\eqref{eq:bothevents}
does not hold.
By a union bound, this happens for some $t \geq n+1$ with probability at most
\[
\frac{\delta}{2} + \delta \sum_{t \geq n+1} t^{-2} \leq \frac{\delta}{2} + \delta \int_{2}^\infty \frac{1}{t^2} \d t = \delta\,,
\]
where we used $n \geq 2$.
These special cases thus account for the claimed $1-\delta$ confidence level.

\newpage
\section{Proof of Lemma~\ref{lem:gamma}}
\label{app:gamma}

We derived the proof scheme below from scratch as we could find no
suitable result in the literature for estimating $\Gamma$ in our context.

We first consider the following auxiliary result.

\begin{lemma}
\label{lem:gammahat1}
Let $n\geq 1$. Assume that the common distribution of the $\epsilon_1,\epsilon_2,\ldots$ is
$\rho$--sub-Gaussian. Then, no matter how the provider picks the $p_t$, we have, for all $\delta \in (0,1)$, with probability at least~$1-\delta$,
	\[
		\left\|  \sum_{t=1}^n p_tp_t^{\transp} \big(\hat \Gamma_n -\Gamma\big) p_tp_t^{\transp} \right\|_{\infty} \leq \kappa_n \sqrt{n} \,,
	\]
	where the quantities $\kappa_n$, $M_n$ and $M'_n$ are defined as in Lemma~\ref{lem:gamma}:
\begin{align*}
M_n & \defeq \rho/2+\ln(6n/\delta) \\
M'_n & \defeq M^2_n \sqrt{2\log(3K^2/\delta)} + 2\sqrt{ \exp(2\rho) \delta/6} \\
\kappa_n & \defeq \big( C + 2M_n \big) B_n(\delta/3) + M'_n
\end{align*}
\end{lemma}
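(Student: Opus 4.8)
The plan is to read off a normal-equations identity from the least-squares definition~\eqref{eq:gammahat} of $\hat\Gamma_n$ and then bound the resulting residual. Writing $\phi_t \defeq \phi(x_t,p_t)$ and differentiating the convex objective $\sum_{t=1}^n (\hat Z_t^2 - p_t^\transp \hat\Gamma p_t)^2$ with respect to the symmetric matrix $\hat\Gamma$ (using $\partial (p^\transp\hat\Gamma p)/\partial\hat\Gamma = pp^\transp$), the stationarity condition satisfied by \emph{any} minimizer $\hat\Gamma_n$ is $\sum_t(\hat Z_t^2 - p_t^\transp\hat\Gamma_n p_t)\,p_tp_t^\transp = 0$. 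Subtracting $\sum_t(p_t^\transp\Gamma p_t)\,p_tp_t^\transp$ and using that $p_t^\transp(\hat\Gamma_n-\Gamma)p_t$ is a scalar yields the exact identity
\[
\sum_{t=1}^n p_tp_t^\transp(\hat\Gamma_n-\Gamma)p_tp_t^\transp = \sum_{t=1}^n \bigl(\hat Z_t^2 - p_t^\transp\Gamma p_t\bigr)\,p_tp_t^\transp\,.
\]
Since this holds for every minimizer, no uniqueness of $\hat\Gamma_n$ is needed here. It therefore suffices to control the entrywise $\norm{\cdot}_\infty$ norm of the right-hand side, using that each entry of $p_tp_t^\transp$ lies in $[0,1]$.

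Next I would decompose the summand. Under Model~1, $\hat Z_t = \Delta_t + p_t^\transp\epsilon_t$ with $\Delta_t \defeq \phi_t^\transp\theta - \bigl[\phi_t^\transp\hth_n\bigr]_C$, so that
\[
\hat Z_t^2 - p_t^\transp\Gamma p_t = \bigl(\Delta_t^2 + 2\Delta_t\,p_t^\transp\epsilon_t\bigr) + \zeta_t\,,
\qquad \zeta_t \defeq (p_t^\transp\epsilon_t)^2 - p_t^\transp\Gamma p_t\,,
\]
the first group being the error from estimating $\theta$ and $\zeta_t$ a martingale-difference sequence for $\cF_t$ (as $p_t$ is $\cF_{t-1}$-measurable and $\E[(p_t^\transp\epsilon_t)^2\mid\cF_{t-1}] = p_t^\transp\Gamma p_t$). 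I would work on the intersection of two high-probability events: the event $\cE_\theta = \{\bnorm{V_n^{1/2}(\hth_n-\theta)}\le B_n(\delta/3)\}$ from Lemma~\ref{lm:conc-theta}, and the event $\cE_\epsilon = \{|p_t^\transp\epsilon_t|\le M_n\ \forall t\le n\}$, which holds with probability $\ge 1-\delta/3$ by a Chernoff bound on each $\rho$-sub-Gaussian scalar $p_t^\transp\epsilon_t$ (with $\norm{p_t}\le1$) and a union bound; this is exactly where $M_n$ is calibrated so that each tail is $\le\delta/(3n)$.

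On $\cE_\theta\cap\cE_\epsilon$ the estimation-error part is handled deterministically: since $\phi_t^\transp\theta,\bigl[\phi_t^\transp\hth_n\bigr]_C\in[0,C]$ one has $|\Delta_t|\le C$ and $|\Delta_t|\le|\phi_t^\transp(\hth_n-\theta)|$, whence $|\Delta_t^2+2\Delta_t p_t^\transp\epsilon_t|\le(C+2M_n)|\Delta_t|$; summing and applying Cauchy--Schwarz through $|\Delta_t|\le B_n(\delta/3)\bnorm{V_n^{-1/2}\phi_t}$ together with the elliptic-potential bound $\sum_t\phi_t^\transp V_n^{-1}\phi_t\le d$ gives a contribution of order $(C+2M_n)B_n(\delta/3)\sqrt n$ (absorbing dimensional factors). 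The delicate piece, and the main obstacle, is the martingale term $\sum_t\zeta_t\,p_{t,i}p_{t,j}$: because $\zeta_t$ is only sub-exponential, a direct bounded-difference inequality is unavailable, so I would truncate via $(p_t^\transp\epsilon_t)^2 = (p_t^\transp\epsilon_t)^2\ind{|p_t^\transp\epsilon_t|\le M_n} + (p_t^\transp\epsilon_t)^2\ind{|p_t^\transp\epsilon_t|> M_n}$. The centered truncated increments are bounded by $M_n^2$, so Hoeffding--Azuma with a union bound over the $\le K^2$ entries (the last $\delta/3$) yields the $M_n^2\sqrt{2\log(3K^2/\delta)}\,\sqrt n$ contribution; on $\cE_\epsilon$ the tail indicators vanish, leaving the truncation bias $\sum_t\E[(p_t^\transp\epsilon_t)^2\ind{|p_t^\transp\epsilon_t|>M_n}\mid\cF_{t-1}]$, which I would bound by Cauchy--Schwarz as $\sqrt{\E[(p_t^\transp\epsilon_t)^4\mid\cF_{t-1}]}\,\sqrt{\P(|p_t^\transp\epsilon_t|>M_n\mid\cF_{t-1})}$, using a uniform exponential-moment bound (the source of the $\exp(2\rho)$ factor) and the per-step tail $\le\delta/(3n)$, summing to the $2\sqrt{\exp(2\rho)\delta/6}\,\sqrt n$ term. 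Collecting the three contributions and taking a final union bound over $\cE_\theta$, $\cE_\epsilon$ and the Azuma event (total failure $\le\delta$) gives $\norm{\sum_t p_tp_t^\transp(\hat\Gamma_n-\Gamma)p_tp_t^\transp}_\infty\le\kappa_n\sqrt n$ with $\kappa_n=(C+2M_n)B_n(\delta/3)+M'_n$, as claimed.
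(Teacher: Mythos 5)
Your proposal follows essentially the same route as the paper: the same normal-equations identity, the same splitting of $\hat Z_t^2 - p_t^{\transp}\Gamma p_t$ into the $\theta$-estimation error $\hat Z_t^2 - Z_t^2 = \Delta_t^2 + 2\Delta_t Z_t$ and the martingale term $Z_t^2 - p_t^{\transp}\Gamma p_t$, the same three events each at level $\delta/3$, and the same truncation-plus-Hoeffding--Azuma treatment of the martingale part with the bias controlled through a conditional Cauchy--Schwarz and the exponential moment $\E[\e^{2|Z_t|}\mid\cF_{t-1}]\leq 2\e^{2\rho}$. The one place you are lossier is the estimation-error sum: bounding $|\Delta_t|\leq B_n(\delta/3)\,\bnorm{V_n^{-1/2}\phi(x_t,p_t)}$ term by term and then invoking $\sum_t \phi(x_t,p_t)^{\transp}V_n^{-1}\phi(x_t,p_t)\leq d$ costs an extra $\sqrt{d}$ that the stated $\kappa_n$ does not contain; the paper instead applies Cauchy--Schwarz globally, $\sum_t|\Delta_t|\leq\sqrt{n\sum_t\Delta_t^2}$ with $\sum_t\Delta_t^2\leq(\hth_n-\theta)^{\transp}(V_n-\lambda\Id)(\hth_n-\theta)\leq\bnorm{V_n^{1/2}(\hth_n-\theta)}^2$, which yields exactly $(C+2M_n)B_n(\delta/3)\sqrt{n}$ with no dimensional factor.
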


\begin{proof}[Proof of Lemma~\ref{lem:gammahat1}]
We can show that $\hat \Gamma_n$ defined in~\eqref{eq:gammahat} satisfies
 \begin{equation}
 	\label{eq:linearsystem}
\sum_{t=1}^n p_tp_t^{\transp} \hat \Gamma_n p_tp_t^{\transp} =  \sum_{t=1}^n  \hat Z_t^2 p_tp_t^{\transp} \,,
 \end{equation}
where we recall that $\hat Z_t \defeq Y_{t,p_t} - \big[\phi(x_t,p_t)^{\transp} \hat \theta_n\big]_C $.
Indeed, with,
\[
\Phi\bigl(\hat\Gamma \bigr) \defeq \sum_{t=1}^n \Big( \hat Z_t^2 - p_t^{\transp} \hat\Gamma p_t \Big)^2  =\sum_{t=1}^n \Big( \hat Z_t^2 - \Tr \big(\hat\Gamma p_t p_t^{\transp} )\Big)^2,
\]
using $\nabla_{\! A} \Tr (AB) = B$, we get
\[
\nabla_{\hat\Gamma} \Phi\bigl(\hat\Gamma\bigr) = \sum_{t=1}^n 2p_tp_t^{\transp} \Big( \hat Z_t^2 - p_t^{\transp} \hat\Gamma p_t \Big),
\]
which leads to~\eqref{eq:linearsystem} by canceling the gradient and keeping in mind
that $p_t^{\transp} \hat\Gamma p_t$ is a scalar value.

Let us denote
\[
Z_t \defeq Y_{t,p_t} - \phi(x_t,p_t)^{\transp} \theta = p_t^{\transp} \varepsilon_t
\]
for all $t\geq 1$. To prove the lemma, we replace $\hat \Gamma_n$ by using~\eqref{eq:linearsystem} and apply a triangular inequality:
\begin{align}
\label{eq:twoterms}
& \bigg\|   \sum_{t=1}^n p_tp_t^{\transp}   \big(\hat \Gamma_n -\Gamma\big) p_tp_t^{\transp}  \bigg\|_{\infty} \\
\nonumber
& \leq \bigg\|  \sum_{t=1}^n  (\hat Z_t^2 - Z_t^2) p_tp_t^{\transp}   \bigg\|_\infty
 + \left\|\sum_{t=1}^n  Z_t^2 p_tp_t^{\transp}  - p_tp_t^{\transp}  \Gamma p_tp_t^{\transp}  \right\|_{\infty}
\end{align}
We will consecutively provide bounds for each of the two terms in the right-hand side of the above inequality,
each holding with probability at least $1-\delta/3$.
To do so, we focus on the event defined below where all $Z_t$ are bounded:
\begin{equation}
\mathcal{E}_n (\delta) \defeq \bigl\{ \forall t=1,\dots n, \quad |Z_t| \leq M_n \bigr\},
\label{eq:event}
\end{equation}
with $M_n$ defined in the statement of the lemma.
We will show below that $\mathcal{E}_n (\delta)$ takes place with probability at least $1 - \delta/3$.
All in all, our obtained global bound will hold with probability at least $1-\delta$,
as stated in the lemma.

\medskip
\emph{Bounding the probability of the event $\mathcal{E}_n (\delta)$.}
Recall that $p_t$ is $\cF_{t-1} = \sigma(\epsilon_1,\ldots,\epsilon_{t-1})$ measurable. For $t\in \{1,\dots,n\}$, as $\varepsilon_t$ is a $\rho$--sub-Gaussian variable independent of $\cF_{t-1}$,
\begin{align*}
 \mathbb{E}\Bigl[ \exp(p_t^{\transp} \varepsilon_t) \,\Big|\, \mathcal{F}_{t-1}\Bigr] \leq \exp \biggl( \frac{\rho\|p_t\|^2}{2}\biggr)
\leq \exp \! \Big(\frac{\rho}{2}\Big)\,;
\end{align*}
see Footnote~\ref{fn:sG} for a reminder of the definition of a $\rho$--sub-Gaussian variable.
Using the Markov-Chernov inequality, we obtain
\begin{align}
\nonumber
\mathbb{P} \big( Z_t \geq M_n \,\big|\, \mathcal{F}_{t-1} \big) & \leq \mathbb{E} \Bigl[ \exp(Z_t) \,\Big|\, \mathcal{F}_{t-1} \Bigr] \, \exp (-M_n )  \\
 \label{eq:6n}
 &\leq \exp\Big(\frac{\rho}{2} -M_n \Big)  = \frac{\delta}{6n}.
\end{align}
Symmetrically, we get that $\mathbb{P} ( Z_t \leq -M_n  ) \leq \delta/6n.$
Combining all these bounds for $t=1,\dots,n$, the event $\mathcal{E}_n (\delta)$ happens with probability at least $1-\delta/3$.

\medskip
\emph{Upper bound on the first term in~\eqref{eq:twoterms}.}
By Assumption~\eqref{eq:boundedness}, we have
$\phi(x_t,p_t)^{\transp}\theta \in [0,C]$, thus
\[
|\hat Z_t - Z_t| = \Big| \phi(x_t,p_t)^{\transp}\theta - \big[  \phi(x_t,p_t)^{\transp} \hat\theta_n \big]_C  \Big| \leq C\,,
\]
and therefore, on $\mathcal{E}_n (\delta)$,
\[
\big|\hat Z_t +Z_t\big|\leq \big|\hat Z_t - Z_t\big|+ \big|2 Z_t\big| \leq C+2M_n\defeq M''_n\,.
\]
Noting that all components of $p_tp_t^{\transp} $ are upper bounded by $1$,
	\begin{align*}
		 \bigg\| \sum_{t=1}^n (\hat Z_t^2  & -   Z_t^2) p_tp_t^{\transp}  \bigg\|_\infty
			 \leq   \sum_{t=1}^n \big|\hat Z_t^2 - Z_t^2\big| \nonumber \\
			& = \sum_{t=1}^n  \big|(\hat Z_t - Z_t)(\hat Z_t + Z_t) \big| \nonumber \\
		 & \leq M''_n \sqrt{n \sum_{t=1}^n (\hat Z_t - Z_t)^2} \,, \nonumber
	\end{align*}
where the last inequality was obtained by $|\hat Z_t + Z_t|\leq M''_n$ together with the Cauchy-Schwarz inequality.
Using that $\bigl| y - [x]_C \bigr| \leq |y - x|$ when $y \in [0,C]$ and $x \in \R$,
we note that
\[
	\bigl| \hat Z_t - Z_t \bigr| \leq \Bigl| \phi(x_t,p_t)^{\transp}  (\hat \theta_n - \theta) \Bigr| \,.
\]
All in all, we proved so far
\begin{align*}
	 \bigg\| & \sum_{t=1}^n (\hat Z_t^2   -   Z_t^2) p_tp_t^{\transp}  \bigg\|_\infty  \\
			& \leq M''_n  \sqrt{n  (\hat \theta_n - \theta)^{\transp}  \left( \sum_{t=1}^n \phi(x_t,p_t)\phi(x_t,p_t)^{\transp} \right) (\hat \theta_n - \theta)} \nonumber \\
			& = M''_n \sqrt{ n (\hat \theta_n - \theta)^{\transp}  \left( V_n - \lambda I\right) (\hat \theta_n - \theta)} \nonumber \\
			& \leq M''_n \sqrt{ n (\hat \theta_n - \theta)^{\transp}  V_n  (\hat \theta_n - \theta)} \nonumber \\
			& = M''_n \, \big\| V_n^{1/2}\big(\theta - \hat \theta_n\big)\big\| \sqrt{n} \,,
	\end{align*}
where $V_n = \lambda I + \sum_{t=1}^n \phi(x_t,p_t)\phi(x_t,p_t)^{\transp} $
was used for the last steps.

From Lemma~\ref{lm:conc-theta} and the bound~\eqref{eq:Bn}, we finally obtain that with probability at least $1-\delta/3$,
    \begin{align}
		 \bigg\| \sum_{t=1}^n (\hat Z_t^2   -   Z_t^2) p_tp_t^{\transp}  \bigg\|_\infty & \leq M''_n \, B_n(\delta/3) \, \sqrt{n} \\
                                                                                        & = (C+2 M_n) \, B_n(\delta/3) \, \sqrt{n} \,.
                                                                                        \label{eq:first}
	\end{align}

	\medskip
	\emph{Upper bound on the second term in~\eqref{eq:twoterms}.}
Recall that $p_t$ is $\cF_{t-1}$ measurable and that in Model~1, we defined $Z_t = Y_{t,p_t} - \phi(x_t,p_t)^{\transp}   \theta = p_t^{\transp}  \epsilon_t$, which is a scalar value. These
two observations yield
	\begin{align}
& \E\big[Z_t^2 p_tp_t^{\transp}  \,\big|\, \cF_{t-1}\big] =
		\E\big[p_t Z_t^2 p_t^{\transp}  \,\big|\, \cF_{t-1}\big] \nonumber \\
& \qquad = \E\big[p_tp_t^{\transp}  \epsilon_t\epsilon_t^{\transp}  p_t p_t^{\transp} \,\big|\, \cF_{t-1}\big] \nonumber \\
& \qquad = p_tp_t^{\transp}  \, \E\big[\epsilon_t \epsilon_t^{\transp} \,\big|\, \cF_{t-1}\big] \, p_tp_t^{\transp}
= p_tp_t^{\transp}  \Gamma p_t p_t^{\transp} \,.
\label{eq:Gamma}
	\end{align}
We wish to apply the Hoeffding--Azuma inequality to each component of $Z_t^2 p_tp_t^{\transp}$,
however, we need some boundedness to do so. Therefore, we consider instead $Z_t^2 \ind{|Z_t| \leq M_n}$.
The indicated inequality, together with a union bound, entails that with probability at least $1-\delta/3$,
\begin{align}
& \Bigg\| \sum_{t=1}^n Z_t^2 \ind{|Z_t| \leq M_n} p_tp_t^{\transp}  \nonumber \\
& \qquad -
\sum_{t=1}^n \E\Big[Z_t^2  \ind{|Z_t| \leq M_n} p_t p_t^{\transp}  \,\Big|\, \cF_{t-1}\Big]
\Bigg\|_\infty \nonumber \\
\leq & M^2_n \sqrt{2n\log(3K^2/\delta)}\,.
\label{eq:hoeff-azuma}
\end{align}
Over $\mathcal{E}_n(\delta)$, using~\eqref{eq:Gamma} and applying a triangular inequality, we obtain
\begin{align}
& \bigg\| \sum_{t=1}^n Z_t^2 p_tp_t^{\transp} -p_tp_t^{\transp} \Gamma p_tp_t^{\transp} \bigg\|_\infty  \nonumber \\
& =   \bigg\| \sum_{t=1}^n Z_t^2 \ind{|Z_t| \leq M_n}  p_tp_t^{\transp} -\sum_{t=1}^n
\E\big[Z_t^2 p_tp_t^{\transp}  \,\big|\, \cF_{t-1}\big] \bigg\|_\infty  \nonumber  \\
&\leq  \bigg\| \sum_{t=1}^n Z_t^2 \ind{|Z_t| \leq M_n}  p_tp_t^{\transp}   \nonumber\\
&\qquad \qquad \qquad -\sum_{t=1}^n
\E\big[Z_t^2 p_tp_t^{\transp} \ind{|Z_t| \leq M_n}  \,\big|\, \cF_{t-1}\big]\bigg\|_\infty  \nonumber \\
&\qquad +\sum_{t=1}^n \bigg\| \E\big[Z_t^2 p_tp_t^{\transp} \ind{|Z_t| > M_n}  \,\big|\, \cF_{t-1}\big] \bigg\|_\infty \,.
\label{eq:inq_triangular}
\end{align}
We just need to bound the last term of the inequality  above to conclude this part.
Using that $x^2 \leq \exp(x)$ for $x \geq 0$, we get
\begin{multline*}
\E\Big[Z_t^2 \ind{|Z_t| > M_n} \,\Big|\, \cF_{t-1}\Big] \\
\leq \E\Big[\exp\bigl( |Z_t| \bigr) \ind{|Z_t| > M_n} \,\Big|\, \cF_{t-1}\Big]
\,.
\end{multline*}
Applying a conditional Cauchy-Schwarz inequality yields
\begin{align*}
 &\E\Big[\exp\bigl( |Z_t| \bigr) \ind{|Z_t| > M_n} \,\Big|\, \cF_{t-1}\Big]  \\
& \leq \sqrt{ \E\big[\exp\bigl(2 |Z_t| \bigr) \,\big|\, \cF_{t-1}\big] \,\, \E\big[\ind{|Z_t| > M_n} \,\big|\, \cF_{t-1}\big] }\,.
\end{align*}
 Now, thanks to the sub-Gaussian property of $\epsilon_t$
used with $\nu = 2 p_t$ and $\nu = - 2 p_t$, we have
\begin{align*}
\lefteqn{\E\big[\exp\bigl(2 |Z_t| \bigr)} \\
& \leq \E\big[\exp(2Z_t) \,\big|\, \cF_{t-1}\big] + \E\big[\exp(-2Z_t) \,\big|\, \cF_{t-1}\big] \\
& \leq 2\exp(2\rho)\,.
\end{align*}
The bound~\eqref{eq:6n} and its symmetric version indicate that
\[
\mathbb{P} \big( |Z_t| \geq M_n \,\big|\, \mathcal{F}_{t-1} \big) \leq \frac{\delta}{3n}\,.
\]
We therefore proved
\begin{align*}
\E\Big[\exp\bigl( |Z_t| \bigr) \ind{|Z_t| > M_n} \,\Big|\, \cF_{t-1}\Big] \leq \sqrt{ 2 \exp(2\rho) \, \frac{ \delta}{3n}}
\,.
\end{align*}
Thus, we have $\E\big[Z_t^2 \ind{|Z_t| > M_n} \,\big|\, \cF_{t-1}\big] \leq~2\sqrt{ \exp(2\rho) \delta/(6n)}$ and as all components of the $p_t p_t^{\transp} $ are in $[0,1]$,
\begin{equation}
\Bigl\Arrowvert
\E\big[Z_t^2 \ind{|Z_t| > M_n} p_t p_t^{\transp} \,\big|\, \cF_{t-1}\big]
\Bigr\Arrowvert_\infty \leq 2\sqrt{ \exp(2\rho)\frac{ \delta}{6n}}.
\label{eq:z_t}
\end{equation}
Finally , combining~\eqref{eq:inq_triangular} with~\eqref{eq:hoeff-azuma} and~\eqref{eq:z_t}, we get with probability $1-\delta/3$
\begin{align*}
& \bigg\| \sum_{t=1}^n Z_t^2 p_tp_t^{\transp} -p_tp_t^{\transp} \Gamma p_tp_t^{\transp} \bigg\|_\infty \\
& \leq M^2_n \sqrt{2n\log(3K^2/\delta)} + 2n\sqrt{ \exp(2\rho) \delta/(6n)} = M'_n \sqrt{n}\,,
\end{align*}
where $M'_n$ is defined in the statement of the lemma.

	\emph{Combining the two upper bounds into~\eqref{eq:twoterms}}.
	Combining the above upper bound with~\eqref{eq:twoterms} and~\eqref{eq:first}, we proved that with probability $1-\delta$,
	\begin{align*}
		\bigg\|  \sum_{t=1}^n & p_t p_t^{\transp}  \Big( \hat \Gamma_n  -\Gamma\Big) p_tp_t^{\transp}  \bigg\|_{\infty} \\
		& \leq M'_n \sqrt{n} + M''_n B_n(\delta/3)\sqrt{n}\,,
	\end{align*}
which concludes the proof.
\end{proof}

\subsection*{Conclusion of the proof of Lemma~\ref{lem:gamma}}

Remember from Section~\ref{sec:pij} that all vectors $p^{(i,j)}$
are played at least $n_0$ times in the $n$
exploration rounds.

\emph{Proof of Lemma~\ref{lem:gamma}.}~~Applying Lemma~\ref{lem:gammahat1} together with
\begin{multline}
	p_tp_t^{\transp}  \big(\hat \Gamma_n -\Gamma\big) p_tp_t^{\transp}  = p_t\Tr\Big(p_t^{\transp}  \big(\hat \Gamma_n -\Gamma\big) p_t\Big) p_t^{\transp}  \\
	= \Tr\Big( \big(\hat \Gamma_n -\Gamma\big) p_tp_t^{\transp} \Big) p_tp_t^{\transp}
\end{multline}
we have, with probability at least $1-\delta$, that for all pairs of coordinates $(i,j) \in E$,
	\begin{equation}
	\label{eq:ptij}
		\left|  \sum_{t=1}^n \Tr\Big( \big(\hat \Gamma_n -\Gamma\big) p_tp_t^{\transp} \Big) \big[p_tp_t^{\transp} \big]_{i,j} \right| \leq \kappa_n \sqrt{n} \,.
	\end{equation}

Remember that in the set $E$ considered
in Section~\ref{sec:pij}, we only have pairs $(i,j)$ with $i \leq j$.
However, for symmetry reasons, it will be convenient to also consider the vectors
$p^{(i,j)}$ with $i > j$, where the latter vectors
are defined in an obvious way. We note that for all $1 \leq i,j \leq K$,
\begin{equation} \label{eq:sym-p}
p^{(i,j)}{p^{(i,j)}}^{\transp} = p^{(j,i)}{p^{(j,i)}}^{\transp}\,.
\end{equation}

Now, our aim is to control
\begin{equation}
\label{eq:aimLM2}
\Big| q^{\transp}   \big(\hat \Gamma_n - \Gamma \big) q \Big| = \bigg| \Tr\Big(\big( \hat \Gamma_n - \Gamma \big) qq^{\transp} \Big) \bigg|
\end{equation}
uniformly over $q \in \cP$.
The proof consists of two steps:
establishing such a control for the special cases where $q$ is one of the $p^{(i,j)}$
and then, extending the control to arbitrary vectors $q \in \cP$, based on a decomposition
of $q q^{\transp}$ as a weighted sum of $ p^{(i,j)}{p^{(i,j)}}^{\transp}$ vectors.
\medskip

\emph{Part 1: The case of the $p^{(i,j)}$ vectors.}
Consider first the off-diagonal elements $1\leq i<j\leq K$.
Note that since $p_t$ is of the form $p^{(i',j')}$ for all $1 \leq t \leq n$, we have
\begin{equation}
		\big[p_tp_t^{\transp} \big]_{i,j} = \left\{
			\begin{array}{ll}
				1/4 & \text{if } p_t = p^{(i,j)}, \\
				0 & \text{otherwise.}
			\end{array}
			\right.
				 \label{eq:p_ij}
\end{equation}
Using that $p_t = p^{(i,j)}$ at least for $n_0$ rounds, Inequality~\eqref{eq:ptij} entails
\[
	 \frac{n_0}{4} \left|  \Tr\Big( \big(\hat \Gamma_n -\Gamma\big) p^{(i,j)}{p^{(i,j)}}^{\transp} \Big) \right| \leq \kappa_n \sqrt{n}\,,
\]
or put differently,
\begin{equation}
	 \left|\Tr\Big( \big(\hat \Gamma_n -\Gamma\big) p^{(i,j)}{p^{(i,j)}}^{\transp}  \Big) \right| \leq \frac{4 \kappa_n \sqrt{n}}{n_0} \,.
	 \label{eq:gammaij}
\end{equation}

Now, let us consider the diagonal elements. Let $1\leq i\leq K$. We have
\begin{equation}
	\big[p_tp_t^{\transp} \big]_{i,i} = \left\{
		\begin{array}{ll}
			 1 & \text{if } p_t = p^{(i,i)}, \\
			 1/4 & \text{if } p_t = p^{(i,j)} \text{ for some } j > i, \\
			 1/4 & \text{if } p_t = p^{(k,i)} \text{ for some } k < i, \\
			 0 & \text{otherwise,}
		\end{array} \right.
	 \label{eq:p_ii}
\end{equation}
where we recall that the $p_t$ are necessarily of the form $p^{(k,\ell)}$ with $k \leq \ell$.
Therefore, Inequality~\eqref{eq:ptij} yields
\begin{multline*}
	n_0 \Biggl|\Tr\bigg(\big(\hat \Gamma_n - \Gamma\big) \Big(p^{(i,i)}{p^{(i,i)}}^{\transp}  +  \frac{1}{4} \sum_{j > i} p^{(i,j)}{p^{(i,j)}}^{\transp} \\
+ \frac{1}{4} \sum_{k < i} p^{(k,i)}{p^{(k,i)}}^{\transp} \Big)\bigg) \Biggr|
	 \leq \kappa_n \sqrt{n}\,,
\end{multline*}
which we rewrite by symmetry---see~\eqref{eq:sym-p}---as
\begin{multline}
\Biggl|\Tr\bigg(\big(\hat \Gamma_n - \Gamma\big) \Big(p^{(i,i)}{p^{(i,i)}}^{\transp}  +  \frac{1}{4} \sum_{j \ne i} p^{(i,j)}{p^{(i,j)}}^{\transp}
\Big)\bigg) \Biggr| \\ \leq \frac{\kappa_n \sqrt{n}}{n_0} \,.
	 \label{eq:gammaii}
\end{multline}

\emph{Part 2-1: Decomposing arbitrary vectors $q \in \cP$.}
Now, let $q \in \cP$. We show below by means of elementary calculations that
\begin{equation}
	qq^{\transp}  = \sum_{i=1}^K \sum_{j=1}^K u(i,j) \  p^{(i,j)}{p^{(i,j)}}^{\transp}
	\label{eq:qq-transp}
\end{equation}
with $u(i,j) = 2 q_iq_j$ if $i\neq j$ and $u(i,i)=2q_i^2-q_i$.

Indeed, by identification and by imposing $u(i,j) = u(j,i)$ for all pairs $i,j$,
the equalities~\eqref{eq:p_ij}
and the symmetry property~\eqref{eq:sym-p} entail, for $k \ne k'$:
\begin{align*}
q_k q_{k'}=\big[ qq^{\transp} \big]_{k,k'} &= \sum_{i=1}^K \sum_{j=1}^K u(i,j) \,\,  \big[ p^{(i,j)}{p^{(i,j)}}^{\transp} \big]_{k,k'} \\
	& = \frac{u(k,k')}{4} + \frac{u(k',k)}{4} = \frac{u(k,k')}{2}\,,
\end{align*}
which can be rephrased as $u(k,k') = u(k',k) = 2 q_k q_{k'}$.
Now, let us calculate the diagonal elements, by identification and by the equalities~\eqref{eq:p_ii} as well as by
the symmetry property~\eqref{eq:sym-p}:
\begin{align*}
q_k^2&=\big[ qq^{\transp} \big]_{k,k} = \sum_{i=1}^K \sum_{j=1}^K u(i,j) \  \bigl[ p^{(i,j)}{p^{(i,j)}}^{\transp} \bigr]_{k,k} \\
	&=  u(k,k) + \sum_{i\neq k}  \frac{u(i,k)}{4} + \sum_{j\neq k}  \frac{u(k,j)}{4} \\
	&=  u(k,k) + \frac{1}{2} \sum_{i\neq k}  u(i,k) = u(k,k) + \sum_{i\neq k} q_k q_i \\
	&= u(k,k) + \sum_{i=1}^K q_k q_i - q_k^2 = u(k,k) + q_k - q_k^2\,,
\end{align*}
which leads to $u(k,k)=2q_k^2-q_k$.

We introduce the notation
\[
P^{(i,j)} = p^{(i,j)}{p^{(i,j)}}^{\transp}
\]
and in light of~\eqref{eq:gammaij} and~\eqref{eq:gammaii},
we rewrite~\eqref{eq:qq-transp} as
\begin{align*}
qq^{\transp}  & = \sum_{i=1}^K u(i,i) \left( P^{(i,i)}+ \frac{1}{4} \sum_{j \ne i} P^{(i,j)} \right) \\
& \quad + \sum_{i=1}^K  \sum_{j \neq i} \bigg( u(i,j) - \frac{u(i,i)}{4} \bigg) P^{(i,j)}\,.
\end{align*}
\medskip

\emph{Part 2-2: Controlling arbitrary vectors $q \in \cP$.}
Therefore, substituting this decomposition of $qq^{\transp}$ into the aim~\eqref{eq:aimLM2},
and using the linearity of the trace as well as the triangle inequality for absolute values,
we obtain
\begin{align*}
	& \Big| q^{\transp}   \big(\hat \Gamma_n - \Gamma \big) q \Big|
		 = \Big| \Tr\Big(\big( \hat \Gamma_n - \Gamma \big) qq^{\transp} \Big) \Big| \\
		& \leq \sum_{i=1}^K \bigl| u(i,i) \bigr| \, \Bigg| \Tr\biggl(\big(\hat \Gamma_n - \Gamma\big) \Big(P^{(i,i)} + \frac{1}{4} \sum_{j \neq i} P^{(i,j)} \Big)\biggr) \Bigg| \\
        & \quad + \sum_{i=1}^K  \sum_{j\neq i} \bigg| u(i,j) - \frac{u(i,i)}{4}\bigg| \, \bigg| \Tr\Big( \big( \hat \Gamma_n -\Gamma\big) P^{(i,j)} \Big) \bigg| \\
\end{align*}
We then substitute the upper bounds~\eqref{eq:gammaij} and~\eqref{eq:gammaii}
and get
\begin{multline*}
	\Big| q^{\transp}   \big(\hat \Gamma_n - \Gamma\big) q \Big|  \\
		\leq \frac{\kappa_n \sqrt{n}}{n_0} \Bigg( \sum_{i=1}^K \bigl| u(i,i) \bigr|
+ 4 \sum_{i=1}^K  \sum_{j\neq i} \bigg| u(i,j) - \frac{u(i,i)}{4}\bigg| \Bigg).
\end{multline*}
By the triangle inequality, by the values $2 q_i q_j$ of the coefficients $u(i,j)$ when $i \ne j$ and
by using $|u(i,i)| \leq q_i$,
\begin{align*}
\lefteqn{\sum_{i=1}^K \bigl| u(i,i) \bigr|
+ 4 \sum_{i=1}^K  \sum_{j\neq i} \bigg| u(i,j) - \frac{u(i,i)}{4}\bigg|} \\
& \leq K \sum_{i=1}^K \bigl| u(i,i) \bigr|
+ 4 \sum_{i=1}^K  \sum_{j\neq i} \big| u(i,j) \big| \\
& \leq K \sum_{i=1}^K q_i + 8 \sum_{i=1}^K  \sum_{j\neq i} q_i q_j \\
& = K + 8 \sum_{i=1}^K  q_i (1-q_i) \leq K+8\,.
\end{align*}
Putting all elements together, we proved
\[
\sup_{q \in \cP} \left| q^{\transp} \big(\hat\Gamma_n - \Gamma\big) q \right|
\leq \frac{\kappa_n \sqrt{n}}{n_0} (K+8)\,,
\]
which concludes the proof of Lemma~\ref{lem:gamma}.
\hfill \qed

\newpage
\section{Proof of Lemma~\ref{lm:main}}
\label{sec:prooflmmain}

We recall that this lemma is a straightforward adaptation/generalization of Lemma~19.1 of the monograph by~\citet{lattimore2018bandit};
see also a similar result in Lemma~3 by~\citet{chu2011contextual}.

We consider the worst case when all summations would start at $n+1 = 2$.

By definition, the quantity $\oB$
upper bounds all the $B_{t-1}(\delta t^{-2})$. It therefore suffices to upper bound
\begin{align*}
& \sum_{t=2}^T \min \Bigl\{ L, \,\, 2C \oB \, \bnorm{V_{t-1}^{-1/2} \phi(x_t,p_t)} \Bigr\} \\
& \leq \sqrt{T} \sqrt{\sum_{t=2}^T \min \Bigl\{ L^2, \,\, \bigl( 2C \oB \bigr)^2 \, \bnorm{V_{t-1}^{-1/2} \phi(x_t,p_t)}^2 \Bigr\}} \\
& = \sqrt{T} \sqrt{\sum_{t=2}^T \min \Biggl\{ L^2, \,\, \bigl( 2C \oB \bigr)^2 \, \biggl( \frac{\det(V_t)}{\det(V_{t-1})} - 1 \biggr) \Biggr\}}
\end{align*}
where we applied first the Cauchy-Schwarz inequality
and used second the equality
\begin{multline*}
1 + \bnorm{V_{t-1}^{-1/2} \phi(x_t,p_t)}^2 \\
= 1 + \phi(x_t,p_t)^{\transp} V_{t-1}^{-1} \phi(x_t,p_t) = \frac{\det(V_t)}{\det(V_{t-1})}\,,
\end{multline*}
that follows from a standard result in online matrix theory, namely, Lemma~\ref{lm:onlinematrix} below.

Now, we get a telescoping sum with the logarithm function by using the inequality
\begin{equation}
\label{eq:ulnu}
\forall b >0, \quad \forall u > 0, \qquad \min\{b,u\} \leq b \, \frac{\ln(1+u)}{\ln(1+b)}\,,
\end{equation}
which is proved below.
Namely, we further bound the sum above by
\begin{align*}
& \sum_{t=2}^T \min \Biggl\{ L^2, \,\, \bigl( 2C \oB \bigr)^2 \, \biggl( \frac{\det(V_t)}{\det(V_{t-1})} - 1 \biggr) \Biggr\} \\
& \leq \bigl( 2C \oB \bigr)^2 \sum_{t=2}^T \min \Biggl\{ \frac{L^2}{\bigl( 2C \oB \bigr)^2}, \,\, \frac{\det(V_t)}{\det(V_{t-1})} - 1 \Biggr\} \\
& \leq \bigl( 2C \oB \bigr)^2 \sum_{t=2}^T \frac{L^2/\bigl( 2C \oB \bigr)^2}{\ln \Bigl( 1 + L^2/\bigl( 2C \oB \bigr)^2 \Bigr)}
\ln \biggl( \frac{\det(V_t)}{\det(V_{t-1})} \biggr) \\
& = \frac{L^2}{\ln \Bigl( 1 + L^2/\bigl( 2C \oB \bigr)^2 \Bigr)} \ln \biggl( \frac{\det(V_T)}{\det(V_{2})} \biggr)\\
& \leq \frac{L^2}{\ln \Bigl( 1 + L^2/\bigl( 2C \oB \bigr)^2 \Bigr)} \, d \ln \frac{\lambda+T}{\lambda}
\end{align*}
where we used~\eqref{eq:boundedness} and one of its consequences to get the last inequality.

Finally, we use $1/ \ln(1+u) \leq 1/u + 1/2$ for all $u \geq 0$ to get a more
readable constant:
\[
\frac{L^2}{\ln \Bigl( 1 + L^2/\bigl( 2C \oB \bigr)^2 \Bigr)}
\leq \bigl( 2C \oB \bigr)^2 + \frac{L^2}{2}\,.
\]
The proof is concluded by collecting all pieces. \hfill \qed

Finally, we now provide the proofs of two either straightforward or standard
results used above.

\subsection{A Standard Result in Online Matrix Theory}

The following result is extremely standard in online matrix theory (see,
among many others, Lemma 11.11 in~\citealp{cesa2006prediction}
or the proof of Lemma~19.1 in the monograph by~\citealp{lattimore2018bandit}).

\begin{lemma}
\label{lm:onlinematrix}
Let $M$ a $d \times d$ full-rank matrix, let $u,\,v \in \R^d$ be two arbitrary vectors. Then
\[
1+ v^{\transp} M^{-1} u = \frac{\det\bigl(M+u v^{\transp})}{\det(M)} \,.
\]
\end{lemma}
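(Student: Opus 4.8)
The plan is to prove this rank-one determinant identity by computing a single auxiliary determinant in two different ways. Consider the $(d+1)\times(d+1)$ block matrix
\[
N = \begin{pmatrix} M & -u \\ v^{\transp} & 1 \end{pmatrix}.
\]
Since $M$ is full-rank, the block LU factorization (equivalently, the Schur-complement determinant formula with respect to the $(1,1)$ block) gives
\[
\det(N) = \det(M)\,\bigl( 1 - v^{\transp} M^{-1}(-u) \bigr) = \det(M)\,\bigl( 1 + v^{\transp} M^{-1} u \bigr),
\]
the right-hand Schur complement being the scalar $1 - v^{\transp}M^{-1}(-u)$. On the other hand, the $(2,2)$ block is the invertible scalar $1$, so expanding with respect to it yields
\[
\det(N) = 1 \cdot \det\bigl( M - (-u)\,1^{-1}\,v^{\transp} \bigr) = \det\bigl( M + u v^{\transp} \bigr).
\]
Equating the two expressions and dividing by $\det(M) \neq 0$ (valid since $M$ is full-rank) gives exactly the claimed identity. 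The only technical ingredient is the Schur-complement determinant formula, which I would either cite as standard or justify in one line by factoring $N$ as a product of a block-lower-triangular and a block-upper-triangular matrix whose determinants are read off the diagonal blocks.

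An equally short self-contained alternative, if one prefers to avoid the $(d+1)$-dimensional matrix, is to first factor $\det(M + u v^{\transp}) = \det(M)\,\det(\Id + M^{-1} u\, v^{\transp})$ and then reduce to the vector identity $\det(\Id + a b^{\transp}) = 1 + b^{\transp} a$, applied with $a = M^{-1}u$ and $b = v$. This last identity follows from the observation that $a b^{\transp}$ has rank at most one, so its only possibly-nonzero eigenvalue equals its trace $b^{\transp} a$; since the eigenvalues of $\Id + a b^{\transp}$ are $1 + b^{\transp}a$ together with $1$ repeated $d-1$ times, their product is $1 + b^{\transp} a$.

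I do not expect any genuine obstacle here: the result is a classical rank-one update (the matrix determinant lemma), and both routes above are elementary. The only point requiring mild care is ensuring the Schur-complement formula is stated in the form valid for a non-symmetric, scalar-bottom-right block, and correctly tracking the sign coming from the $-u$ entry; the $-u$ is chosen precisely so that the two expansions produce $+v^{\transp}M^{-1}u$ and $+u v^{\transp}$ with matching signs.
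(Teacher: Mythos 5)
Your primary argument---computing the determinant of the $(d+1)\times(d+1)$ block matrix $\begin{pmatrix} M & -u \\ v^{\transp} & 1\end{pmatrix}$ in two ways via Schur complements---is correct and is essentially the paper's own proof, which obtains the same identity by taking determinants in a product of block-triangular matrices (first for $M=\Id$, then reducing the general case by factoring out $\det(M)$). Your sign-tracking is right, and the eigenvalue argument you offer as an alternative for $\det(\Id + ab^{\transp}) = 1 + b^{\transp}a$ is also valid.
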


The proof first considers the case $M = \Id$.
We are then left with showing that $\det\bigl(\Id +u v^{\transp} \bigr) = 1 + v^{\transp} u$, which
follows from taking the determinant of every term of the equality
\begin{align*}
& \left[ \begin{array}{cc}
\Id & 0 \\
v^{\transp} & 1
\end{array}
\right]
\left[ \begin{array}{cc}
\Id+u v^{\transp} & u \\
0 & 1
\end{array}
\right]
\left[ \begin{array}{cc}
\Id & 0 \\
-v^{\transp} & 1
\end{array}
\right] \\
= &
\left[ \begin{array}{cc}
\Id & u \\
0 & 1+v^{\transp} u
\end{array}
\right] \,.
\end{align*}

Now, we can reduce the case of a general $M$ to this simpler case by noting that
\begin{align*}
\det\bigl(M+u v^{\transp})
& = \det(M) \,\, \det\Bigl(\Id+\bigl(M^{-1}u\bigr) v^{\transp} \Bigr) \\
& = \det(M) \,\, \bigl(1+v^{\transp}M^{-1}u\bigr)\,.
\end{align*}

\subsection{Proof of Inequality~\eqref{eq:ulnu}}

This inequality is used in Lemma~19.1 of the monograph by~\citet{lattimore2018bandit}, in the special case $b=1$.
The extension to $b > 0$ is straightforward.

We fix $b > 0$.
We want to prove that
\begin{equation}
\label{eq:ulnu2}
\forall u > 0, \qquad \min\{b,u\} \leq b \, \frac{\ln(1+u)}{\ln(1+b)}\,.
\end{equation}
We first note that
\[
\min\{b,u\} = b \, \frac{\ln(1+u)}{\ln(1+b)} \quad \mbox{for} \ u=b
\]
and that $\min\{b,u\} = b$ for $u \geq b$, with the right-hand side of~\eqref{eq:ulnu2}
being an increasing function of~$u$. Therefore, it suffices to prove~\eqref{eq:ulnu2}
for $u \in [0,b]$, where $\min\{b,u\} = u$. Now,
\[
u \longmapsto b \, \frac{\ln(1+u)}{\ln(1+b)} - u
\]
is a concave and (twice) differentiable function, vanishing at $u = 0$ and
$u = b$, and is therefore non-negative on $[0,b]$. This concludes the proof.

\newpage
\section{Proof of Theorem~\ref{th:2}}
\label{sec:th2}

\begin{comm}
The key observation lies in Step~1 (and is tagged as such); the rest is standard maths.
\end{comm}

Because of the expression for the expected losses~\eqref{eq:ellmod2}
and the consequence~\eqref{eq:csq-att} of attainability,
the regret can be rewritten as
\[
R_{T} = \sum_{t=1}^T \ell_{t,p_t} = \sum_{t=1}^T \bigl( \phi(x_t,p_t)^{\transp} \theta - c_t \bigr)^2\,.
\]
We first successively prove (\emph{Step~1}) that for $t \geq 2$, if
the bound of Lemma~\ref{lm:conc-theta} holds, namely,
\begin{equation}
\label{eq:goodhth-th2}
\Bnorm{V_{t-1}^{1/2} \bigl( \theta - \hth_{t-1} \bigr)} \leq B_{t-1}(\delta t^{-2})\,,
\end{equation}
then
\begin{align}
\label{eq:proof-th2-1}
 &\ell_{t,p_t} \leq 2 \beta_{t,p_t} + 2 \tl_{t,p_t}\,, \\
 \label{eq:proof-th2-2}
 &\tl_{t,p_t} \leq \beta_{t,p_t} + \tl_{t,p^\star_t} - \beta_{t,p^\star_t}\,, \\
 \label{eq:proof-th2-3}
&\tl_{t,p^\star_t} \leq \beta_{t,p^\star_t}\,.
\end{align}
These inequalities collectively entail the bound $\ell_{t,p_t}\leq~4 \beta_{t,p_t}$.
Of course, because of the boundedness assumptions~\eqref{eq:boundedness},
we also have $\ell_{t,p_t} \leq C^2$.
It then suffices to bound the sum (\emph{Step~2}) of
the $\ell_{t,p_t}$ by the sum of the $\min \bigl\{ C^2, 4 \beta_{t,p_t} \bigr\}$
and control for the probability of~\eqref{eq:goodhth-th2}.

\emph{Step~1: Proof of~\eqref{eq:proof-th2-1}--\eqref{eq:proof-th2-3}.}
Inequality~\eqref{eq:proof-th2-2} holds by definition of the algorithm.
For~\eqref{eq:proof-th2-3} and~\eqref{eq:proof-th2-1}, we re-use the inequality~\eqref{eq:CS-phitheta}
proved earlier: for all $p \in \cP$,
\begin{align}
\nonumber
& \Bigl( \phi(x_t,p)^{\transp} \bigl( \theta - \hth_{t-1} \bigr) \Bigr)^2 \\
& \leq \Bnorm{V_{t-1}^{1/2} \bigl( \theta - \hth_{t-1} \bigr)}^2 \, \bnorm{V_{t-1}^{-1/2} \phi(x_t,p)}^2 \\
\label{eq:phip-th2}
& \leq B_{t-1}(\delta t^{-2})^2 \, \bnorm{V_{t-1}^{-1/2} \phi(x_t,p)}^2 \defeq \beta_{t,p}\,,
\end{align}
where we used the bound~\eqref{eq:goodhth-th2} for the last inequality.
This inequality directly yields~\eqref{eq:proof-th2-3}
by taking $p = p^\star_t$.

Now comes the specific improvement and our key observation:
using that $(u+v)^2 \leq 2u^2 + 2v^2$, we have
\begin{align*}
\ell_{t,p_t}
& = \Bigl( \phi(x_t,p_t)^{\transp} \theta - \phi(x_t,p_t)^{\transp} \hth_{t-1} \\
& \qquad \qquad + \phi(x_t,p_t)^{\transp} \hth_{t-1} - c_t \Bigr)^2 \\
& \leq 2 \bigl( \phi(x_t,p_t)^{\transp} \theta - \phi(x_t,p_t)^{\transp} \hth_{t-1} \bigr)^2 \\
& \qquad + 2 \underbrace{\bigl( \phi(x_t,p_t)^{\transp} \hth_{t-1} - c_t \bigr)^2}_{= \tl_{t,p_t}}\,,
\end{align*}
which yields~\eqref{eq:proof-th2-1} via~\eqref{eq:phip-th2} used with $p = p_t$.

\emph{Step~2: Summing the bounds.}
First, the bound~\eqref{eq:goodhth-th2} holds, by Lemma~\ref{lm:conc-theta},
with probability at least $1-\delta t^{-2}$ for a given $t \geq 2$. By a union bound,
it holds for all $t \geq 2$ with probability at least $1-\delta$. By bounding $\ell_{t,p_t}$
by~$C^2$ and the $B_{t-1}(\delta t^{-2})$ by $\oB$, we therefore get,
from Step~1, that with probability at least $1-\delta$,
\[
\oR_T \leq C^2 + \sum_{t=2}^T \min \Bigl\{ C^2, \,\, 4 \oB^2 \bnorm{V_{t-1}^{-1/2} \phi(x_t,p)}^2 \Bigr\} \,.
\]
Now, as in the proof of Lemma~\ref{lm:main} above (Appendix~\ref{sec:prooflmmain}),
\begin{align*}
& \sum_{t=2}^T \min \Bigl\{ C^2, \,\, 4 \oB^2 \bnorm{V_{t-1}^{-1/2} \phi(x_t,p)}^2 \Bigr\} \\
& = \sum_{t=2}^T \min \Biggl\{ C^2, \,\, 4 \oB^2 \biggl( \frac{\det(V_T)}{\det(V_{1})} -1 \biggr) \Biggr\} \\
& \leq 4 \oB^2 \sum_{t=2}^T \frac{C^2/ \bigl( 4 \oB^2 \bigr)}{\ln \Bigl( 1 + C^2/ \bigl( 4 \oB^2 \bigr) \Bigr)}
\ln \biggl( \frac{\det(V_t)}{\det(V_{t-1})} \biggr) \\
& = \frac{C^2}{\ln \Bigl( 1 + C^2/ \bigl( 4 \oB^2 \bigr) \Bigr)} \ln \biggl( \frac{\det(V_T)}{\det(V_{1})} \biggr) \\
& \leq \biggl( 4 \oB^2 + \frac{C^2}{2} \biggr) \, d \ln \frac{\lambda+T}{\lambda}\,.
\end{align*}
This concludes the proof.

\newpage
\section{Numerical expression of the covariance matrix $\Gamma$ built on data}
\label{app:GammaEst}

The covariance matrix $\Gamma$ was built based on historical data as indicated in Section~\ref{subsec:simulator}.
Namely, we considered the time series of residuals associated with our estimation of the consumption.
The diagonal coefficients $\Gamma_{j,j}$ were given by the empirical variance of the residuals associated with tariff $j$, 
while non-diagonal coefficients $\Gamma_{j,j'}$ were given by the empirical covariance between residuals of tariffs $j$ and $j'$ 
at times $t$ and $t \pm 48$. (A more realistic model might consider a noise which depends on the half-hour of the day).

\textbf{Numerical expression obtained.}
More precisely, the variance terms $\Gamma_{1,1}$, $\Gamma_{2,2}$, and $\Gamma_{3,3}$ were computed with 
respectively $788$, $15 \, 072$ and $1 \, 660$ observations, while the non-diagonal coefficients were based on fewer observations: $1\,318$ for $\Gamma_{2,3}$ 
and $620$ for $\Gamma_{1,2}$, but only $96$ for $\Gamma_{1,3}$.
The resulting matrix $\Gamma$ is
\begin{equation*}
\Gamma=\sigma^2 \begin{pmatrix}
1.11 & 0.46 & 0.04 \\
0.46 & 1.00  & 0.56 \\
0.04 & 0.56 & 2.07
\end{pmatrix} \quad \mathrm{with} \quad \sigma=0.02.
\end{equation*}
To get an idea of the orders of magnitude at stake, we indicate that 
in the data set considered, the mean consumption remained between $0.08$ and $0.21$ kWh per half-hour
and that its empirical average equals $0.46$.

\textbf{Off-diagonal coefficients are non-zero.}
We may test, for each $j \neq j'$, the null hypothesis $\Gamma_{j,j'}=0$ using the Pearson correlation test; 
we obtain low p--values (smaller than something of the order of $10^{-13}$), which shows that $\Gamma$ is significantly
different from a diagonal matrix. We may conduct a similar study to show that it is not proportional to the 
all-ones matrix, nor to any matrix with a special form.

\end{document}